\documentclass[lettersize,journal]{IEEEtran}
\usepackage{amsmath,amsfonts}
\usepackage{algorithmic}
\usepackage{algorithm}
\usepackage{array}
\usepackage[caption=false,font=normalsize,labelfont=sf,textfont=sf]{subfig}
\usepackage{textcomp}
\usepackage{stfloats}
\usepackage{url}
\usepackage{verbatim}
\usepackage{graphicx}
\usepackage{cite}
\usepackage{pgfplots} 
\pgfplotsset{compat=1.16}
\usepackage{subcaption}
\usepackage{amsmath}
\usepgfplotslibrary{groupplots}

\usepackage{caption}
\usepackage{amsthm}
\usepackage{enumitem}
\usepackage{subcaption}
\usepackage{tikz}
\usepackage{array}
\usepackage{balance}
\usepackage{bigstrut,array,multirow,tabularx}
\usepackage{caption}
\usepackage{amsmath}
\usepackage{amssymb}
\usepackage{pifont}
\newcommand{\cmark}{\ding{51}}%
\newcommand{\xmark}{\ding{55}}%
\usepackage{dsfont}
\usepackage{mathtools}

\usepackage{algorithm}
\usepackage{algorithmic}
\usepackage{makecell} 
\usepackage{colortbl}
\usepackage{setspace}

\usepackage{booktabs}
\theoremstyle{remark}
\newtheorem*{remark}{Remark}

\definecolor{lightblue}{RGB}{0.93,0.95,1.0} 
\definecolor{lightred}{RGB}{1.0,0.93,0.93} 
\newtheorem{definition}{Definition}[section]
\newtheorem{theorem}{Theorem}[section]

\definecolor{bblue}{HTML}{FFE333}
\definecolor{rred}{HTML}{7FFF00}
\definecolor{ggreen}{HTML}{87CEEB}

\begin{document}

\title{Memory Is No Longer a Bottleneck: Memory-Efficient Graph Filtering for Scalable Collaborative Filtering}

\author{Jin-Duk~Park
        and~Won-Yong~Shin,~{\em Senior Member}, {\em IEEE}

        \IEEEcompsocitemizethanks{\IEEEcompsocthanksitem J.-D. Park and W.-Y. Shin are with the School of Mathematics and Computing (Computational Science and Engineering), Yonsei University, Seoul 03722, Republic of Korea.
 \protect\\
E-mail: \{jindeok6, wy.shin\}@yonsei.ac.kr
(Corresponding author: Won-Yong Shin.)}
}

\markboth{Submitted to IEEE Transactions on Knowledge and Data Engineering}%
{Shell \MakeLowercase{\textit{et al.}}: A Sample Article Using IEEEtran.cls for IEEE Journals}


\maketitle

\begin{abstract}
Graph convolutional networks (GCNs) have demonstrated significant success in capturing complex user--item relationships for collaborative filtering (CF). However, due to their reliance on extensive model training, training-free graph filtering (GF)-based CF methods have emerged as a promising alternative, offering computational efficiency by smoothing graph signals via matrix operations. In particular, polynomial GF-based approaches demonstrate improved accuracy through their ability to design more expressive and flexible filtering functions. Despite these advantages, existing GF methods suffer from a critical memory bottleneck: they necessitate storing the full item similarity graph, incurring prohibitive memory costs for large-scale datasets, which limits their practical applicability. To tackle this challenge, we propose \textsf{Mem-GF} (\underline{Mem}ory-efficient \underline{GF}), a new GF-based CF method that departs from conventional designs by principally leveraging the structure of \textit{Krylov subspaces} as a core mechanism for approximating polynomial graph filters without explicitly storing the item similarity graph. We theoretically analyze the minimum Krylov subspace size that guarantees lossless approximation. Through extensive experiments, we demonstrate that \textsf{Mem-GF} achieves up to 5.74$\times$ lower memory usage and 4.38$\times$ speedup in runtime, while consistently exceeding the recommendation accuracy of state-of-the-art GF and GCN-based methods. \textsf{Mem-GF} robustly scales to datasets with tens of millions of interactions, establishing itself as a practically viable and theoretically grounded solution for efficient CF. The source code of \textsf{Mem-GF} is available at \url{https://github.com/jindeok/Mem-GF}.
\end{abstract}

\begin{IEEEkeywords}
Collaborative filtering, Krylov subspace, memory, polynomial graph filter, recommender system.
\end{IEEEkeywords}

\section{Introduction}

\subsection{Background and Motivation} 
\IEEEPARstart{R}{ecommender} systems have become a cornerstone component of modern online platforms \cite{linden2003amazon,covington2016deep,gomez2015netflix,gong2020edgerec}. By leveraging the information of historical user--item interactions, collaborative filtering (CF) has proven one of the most effective recommendation techniques in uncovering latent user interests and item affinities, facilitating personalization and relevance for positive user experiences \cite{he2017neural,liang2018variational,fu2018novel,he2020lightgcn}. 

On one hand, graph convolutional networks (GCNs) \cite{DBLP:conf/iclr/KipfW17, he2020lightgcn, wang2019neural} have emerged as a state-of-the-art CF solution. Thanks to their intrinsic ability to model the high-order connectivity between users and items, GCN-based approaches have achieved high accuracy in various recommendation scenarios \cite{he2020lightgcn, wang2019neural,park2023criteria, wu2019session, kim2025leveraging}. However, GCNs inherently suffer from scalability issues due to their recursive message-passing nature, which may limit their applicability to large-scale systems \cite{borisyuk2024lignn, xia2023graph}. As a result, the training time required for GCNs is often significantly longer compared to other architectures such as multi-layer perceptrons (MLPs) that are widely used for CF \cite{xia2023graph, zhang2021graph}. This increased computational demand can hinder the efficiency of real-time recommendation systems, making it crucial to explore alternative methods that balance accuracy and scalability.

On the other hand, from the spectral viewpoint, GCNs can be regarded as approximate and learnable graph filters via parameter optimization \cite{DBLP:conf/iclr/KipfW17, shen2021powerful, liu2023personalized}. To mitigate the extensive training time associated with GCNs, recent research on CF has increasingly focused on graph filtering (GF)-based CF methods that leverage \textit{predefined graph filters} instead of learnable ones \cite{shen2021powerful, park2024turbo,xia2024hierarchical,xia2022fire, liu2023personalized}. Due to their training-free nature, GF has been successfully applied beyond standard training-based CF to diverse recommendation scenarios, including sequential \cite{xia2022fire}, group \cite{kim2025leveraging}, cross-domain \cite{lee2024graph}, multi-modal \cite{Roh2026MM-GF}, and multi-criteria \cite{park2025criteria} recommendations, demonstrating its broad applicability and effectiveness. By using the item similarity graph, these GF-based CF methods transform each user's interaction vector (\textit{i.e.}, graph signals) via various forms of graph filters, such as linear/ideal low-pass filters (LPFs) or polynomial approximations, to improve computational efficiency of the GF process~\cite{shen2021powerful,xia2022fire,liu2023personalized,park2024turbo,xia2024hierarchical,qin2025polycf}.
\begin{figure}[t]
    \centering
    \includegraphics[width=0.99\columnwidth]{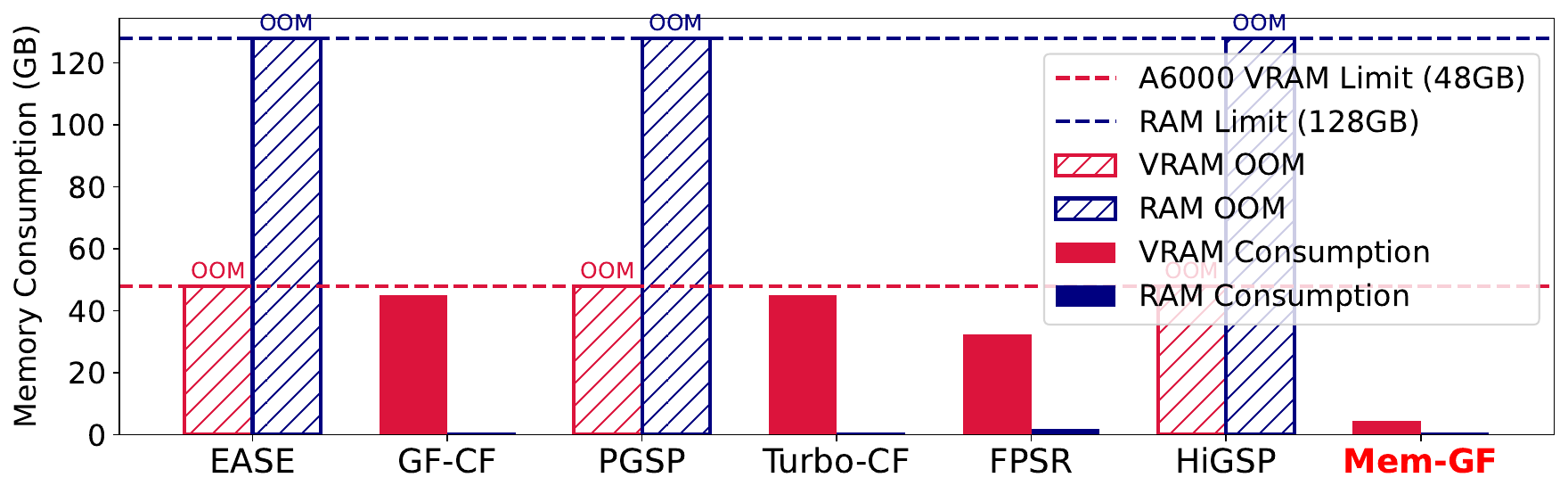}
    \vspace{-3mm}
    \caption{Memory usage (RAM/VRAM) comparison during preprocessing among six representative GF-based CF methods and our method (\textsf{Mem-GF}) on the Amazon-book dataset. Here, `OOM' represents an out-of-memory error.}
    \label{fig;motive}
\end{figure}
\begin{figure}[t]
    \centering
    \includegraphics[width=0.95\columnwidth]{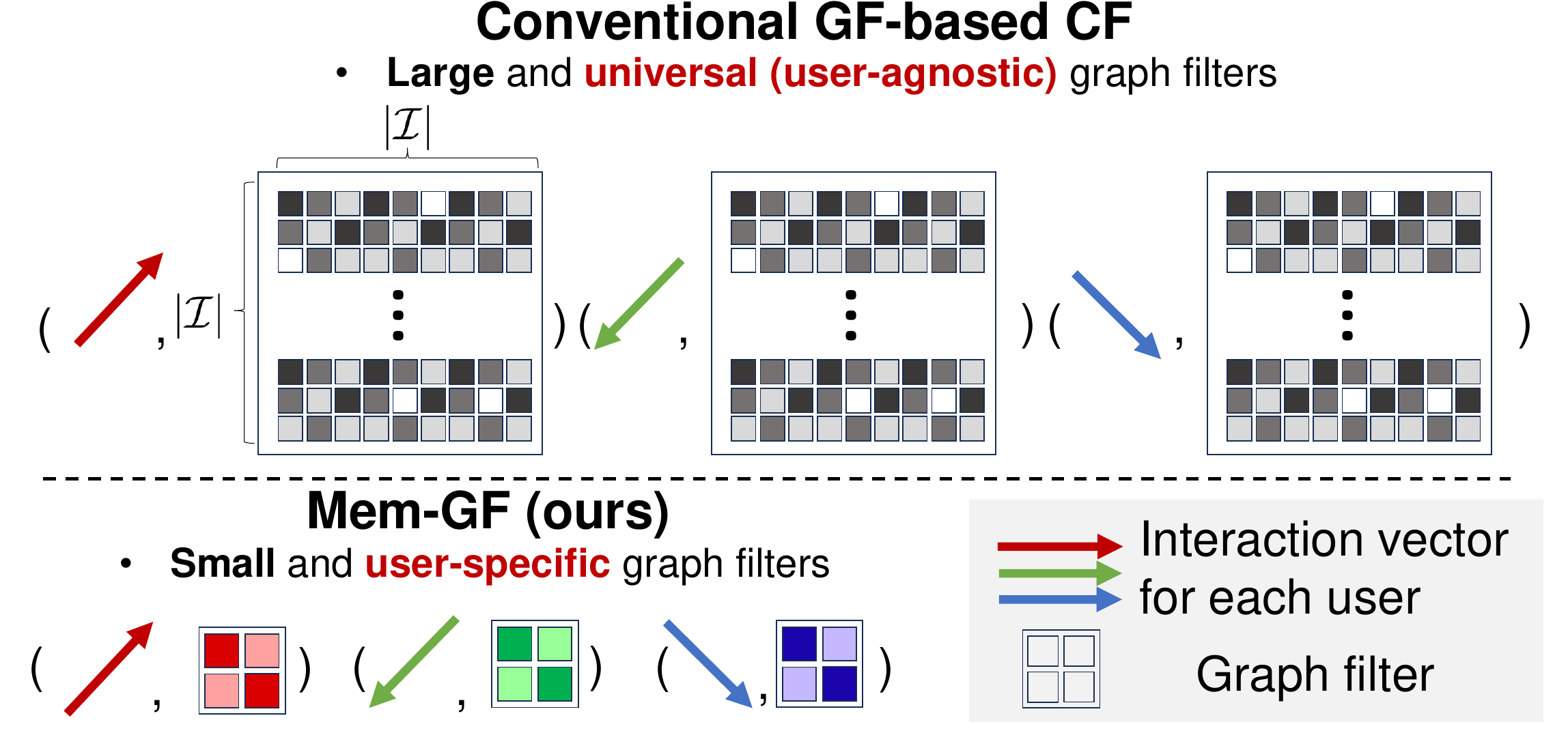}
    \caption{Comparison of space consumption between conventional GF-based CF methods and the proposed \textsf{Mem-GF}. Here, $|\mathcal{I}|$ represents the number of items in a dataset.}
    \label{fig;intro}
\end{figure}

Despite these merits, many GF-based CF methods \cite{shen2021powerful,xia2022fire,liu2023personalized,park2024turbo,xia2024hierarchical} still face a critical {\it memory bottleneck} when high-accuracy filters are deployed at scale. Fig.~\ref{fig;motive} showcases the RAM and VRAM ({\it i.e.}, GPU memory) usage of six representative GF-based approaches (such as EASE \cite{steck2019embarrassingly}, GF-CF \cite{shen2021powerful}, PGSP \cite{liu2023personalized}, Turbo-CF \cite{park2024turbo}, FPSR \cite{wei2023fine}, and HiGSP \cite{xia2024hierarchical}) on the Amazon-book dataset \cite{he2016ups}. In a single-node environment equipped with 48\,GB of VRAM and 128\,GB of RAM, most of these methods suffer from out-of-memory (OOM) errors, underscoring their limited feasibility for large-scale recommendation scenarios. The core design issue is not merely implementation, but the reliance on a {\it universal} ({\it i.e.}, user-agnostic) graph filter whose dimension is $|\mathcal I|\times|\mathcal I|$, where $|\mathcal{I}|$ is the item cardinality, {\it i.e.}, the number of items. Even when sparse matrix operations, top-$k$ eigenvalue components, or polynomial graph filters reduce computation, the filtering stage still tends to depend on the full-size item similarity matrix. As illustrated in Fig.~\ref{fig;intro}, applying such filters uniformly to all user signals severely limits scalability and can make real-time inference infeasible under memory-constrained environments.

\subsection{Main Contribution}
To address this challenge, we introduce \textsf{Mem-GF}, a memory-efficient GF method for CF that goes beyond straightforward Krylov subspace usage. Instead of treating Krylov methods \cite{saad2011numerical,trefethen1997numerical} as a computational shortcut, \textsf{Mem-GF} uses them as the filtering principle itself. For each user, a Krylov subspace captures progressively transformed directions of the user's interaction vector under repeated applications of the item similarity matrix. This enables polynomial graph filtering without explicitly storing the full item similarity matrix. Importantly, \textsf{Mem-GF} facilitates the design of versatile graph filters through high-order polynomial approximations, exemplified by the fifth-order polynomial that closely approximates Gaussian filters widely used in image and graph domains \cite{ito2000gaussian, li2022g}. Notably, \textsf{Mem-GF} offers significant advantages in both preprocessing and inference efficiency. In particular, its memory-efficient and low-latency {\it inference} capability addresses a critical requirement for deploying GF-based recommenders in real-world environments, where tight hardware constraints and real-time responsiveness are essential \cite{jiang2021microrec,lian2020lightrec}. 

Our contributions are summarized as follows:
\begin{itemize}
    \item \textbf{Problem identification}: We clarify the universal ({\it i.e.}, user-agnostic) graph filter design that causes the memory bottleneck in many GF-based CF methods and empirically demonstrate its impracticality on large-scale datasets.
    \item \textbf{Principled Krylov-based solution}: We propose a novel GF approach that integrates Krylov subspaces at its core, achieving per-user filter approximation without storing the full item similarity matrix, thereby enhancing filter design flexibility.
    \item \textbf{Theoretical support}: We rigorously analyze the condition on the number of Krylov subspace bases for which no approximation error occurs. We also theoretically show that \textsf{Mem-GF} achieves {\it linear space/time complexity} to the number of items, users, and interactions.
    \item \textbf{Comprehensive evaluation}: We empirically show that \textsf{Mem-GF} delivers \textbf{5.74$\times$ memory savings} and \textbf{4.38$\times$ speedup}. Despite being primarily designed for improving efficiency, it still achieves state-of-the-art recommendation accuracy across all benchmark datasets.
\end{itemize}

\subsection{Organization and Notations}
The remainder of this paper is organized as follows. Section~\ref{app:rel} surveys related work. Section~\ref{section 2} introduces the preliminaries by reviewing GF and Krylov subspaces. Section~\ref{section 3} presents the proposed \textsf{Mem-GF} method, including its methodological details, theoretical analysis, and complexity analysis. Section~\ref{section 4} reports extensive experimental results and analyses evaluating efficiency, scalability, and accuracy. Section~\ref{section 5} concludes the paper with a discussion of future research directions.

Table~\ref{tab:notation} summarizes the notation that is used in this paper. This notation will be formally defined in the following sections when we introduce our methodology.

\begin{table}[t]
\scriptsize
\centering
\caption{Summary of notations.}
\label{tab:notation}
\begin{tabular}{@{}p{0.13\columnwidth}p{0.73\columnwidth}@{}}
\toprule
\textbf{Notation} & \textbf{Description} \\
\midrule
$\mathcal{U},\ \mathcal{I}$ & Set of users, $\mathcal{U}$, and set of items, $\mathcal{I}$\\
$\mathbf{R}$ & User--item interaction matrix\\
$\mathbf{r}_u$ & $u$-th row of the interaction matrix $\mathbf{R}$ (graph signal for user $u$) \\
$\tilde{\mathbf{R}}$ & Degree-normalized interaction matrix: $\tilde{\mathbf{R}} = \mathbf{D}_\mathcal{U}^{\gamma} \mathbf{R} \mathbf{D}_\mathcal{I}^{1-\gamma}$\\
$\bar{\mathbf{R}}$ & Adjusted interaction matrix: $\bar{\mathbf{R}}=\tilde{\mathbf{R}}^{\circ s}$\\
$\mathbf{D}_\mathcal{U},\ \mathbf{D}_\mathcal{I}$ & Degree matrices: $\mathbf{D}_\mathcal{U}=\mathrm{diag}(\mathbf{R}\mathbf{1}),\ \mathbf{D}_\mathcal{I}=\mathrm{diag}(\mathbf{1}^\top \mathbf{R})$\\
$\tilde{\mathbf{P}}$ & Item similarity matrix: $\tilde{\mathbf{P}}=\bar{\mathbf{R}}^\top \bar{\mathbf{R}}$ for \textsf{Mem-GF} and $\tilde{\mathbf{P}}=\tilde{\mathbf{R}}^\top \tilde{\mathbf{R}}$ for the other methods\\
$\mathbf{L},\ \mathbf{U},\ \boldsymbol{\Lambda}$ & Graph Laplacian $\mathbf{L}$, its eigenvectors $\mathbf{U}$, and eigenvalues $\boldsymbol{\Lambda}$\\
$H(\cdot),\ h(\cdot)$ & Graph filter $H(\cdot)$ and its frequency response  $h(\cdot)$\\
$\mathbf{s}_u$ & Predicted scores for user $u$: $\mathbf{s}_u=\|\mathbf{r}_u\|_2\,\mathbf{Q}_u\,f(\mathbf{T}_u)\,\mathbf{e}_1$\\
$\mathcal{K}_K(\cdot)$ & $K$-dimensional Krylov subspace\\
$\mathbf{Q}_u,\ \mathbf{T}_u$ & User-specific Krylov basis $\mathbf Q_u$ and projected tridiagonal matrix $\mathbf T_u=\mathbf Q_u^\top\tilde{\mathbf P}\mathbf Q_u$ in the Krylov subspace\\
$K$ & Size of Krylov basis\\
$\gamma$& Hyperparameter for normalizing $\mathbf{R}$\\
$s$ & Hyperparameter for adjusting $\tilde{\mathbf{R}}$ via the Hadamard power\\
$\mathbf{e}_1$ & First canonical basis vector in $\mathbb{R}^{K}$\\
\bottomrule
\end{tabular}
\end{table}

\section{Related Work}
\label{app:rel}
In this section, we review CF methods related to \textsf{Mem-GF}, with emphasis on scalable GF-based CF and Krylov-based graph learning.

\textbf{MF-based CF methods.} MF remains a foundational technique in CF, representing users and items as latent vectors and using their dot product to approximate interactions. Early approaches relied on low-rank decompositions~\cite{he2016vbpr}, while NeuMF~\cite{he2017neural} enhanced expressiveness by introducing an MLP-based similarity function. GRMF~\cite{rao2015collaborative} integrated graph Laplacian regularization, and DMF~\cite{xue2017deep} leveraged deep neural architectures. HOP-rec~\cite{yang2018hop} bridged MF and GCN by explicitly capturing higher-order relationships in user--item graphs.

\textbf{GCN-based CF methods.} Viewing user--item interactions as a bipartite graph enables GCN-based models to exploit high-order connectivity. GC-MC~\cite{berg2017graph} merged GCNs with matrix completion for link prediction. PinSage~\cite{ying2018graph} scaled graph convolution with random walks. SpectralCF~\cite{zheng2018spectral} used spectral graph structure, NGCF~\cite{wang2019neural} modeled higher-order collaborative signals, LightGCN~\cite{he2020lightgcn} simplified propagation layers, and SGL~\cite{wu2021self} adopted contrastive learning for robust representation learning.

\textbf{Autoencoder-based CF methods.} Autoencoder-based approaches reconstruct each user's interaction vector in a single latent space. CDAE~\cite{wu2016collaborative} incorporated denoising, Multi-DAE and Multi-VAE~\cite{liang2018variational} improved reconstruction with deep and variational encoders, RecVAE~\cite{shenbin2020recvae} added regularization, and MD-CVAE~\cite{zhu2022mutually} injected item embeddings into latent user modeling.

\textbf{GF-based CF methods.} Training-free GF-based CF methods avoid iterative model training by filtering user signals on an item similarity graph. GF-CF~\cite{shen2021powerful} uses linear and ideal LPFs; EASE~\cite{steck2019embarrassingly} can be interpreted from a graph signal processing viewpoint; PGSP~\cite{liu2023personalized} augments the similarity graph; HiGSP~\cite{xia2024hierarchical} customizes filters by user clusters; Turbo-CF~\cite{park2024turbo} accelerates inference with predefined polynomial LPFs; SGFCF~\cite{peng2024powerful} uses top-$k$ singular values; and ChebyCF~\cite{kim2025graph} employs Chebyshev polynomial filtering. These methods improve scalability through sparse matrix operations, top-$k$ eigenvalue components, polynomial graph filters, or user clustering, but they still commonly rely on a large item similarity graph or a full-size graph filter. In contrast, \textsf{Mem-GF} performs GF in a user-specific Krylov subspace and avoids constructing the full item similarity matrix.

\textbf{Krylov-based graph learning.} LanczosNet~\cite{liao2019lanczosnet} also uses Krylov subspaces, but for learnable spectral graph convolution in GNNs. Its goal is representation learning over graph nodes, requiring training and inference over the graph. \textsf{Mem-GF} differs in its purpose and inference setting: it is a training-free CF method that builds per-user Krylov subspaces for the item similarity matrix to reduce memory usage during recommendation. Thus, Krylov subspaces are not used for training-based GNNs, but as the core mechanism for memory-efficient graph filtering in CF.
\section{Preliminaries} 
This section presents the preliminaries, providing an overview of the fundamental concepts underlying GF and Krylov subspaces.
\label{section 2}
\subsection{Characterization of GF}
 We begin by presenting the basic concepts of GF (also known as graph signal processing). First, let us consider a weighted, undirected graph $G = (V, E)$, represented by an adjacency matrix $\mathbf{A} \in\mathbb{R}^{|V|\times |V|}$. A graph signal assigns a scalar or feature vector to each node; in this paper, we primarily consider a scalar signal represented as $\mathbf{x}=[x_1, x_2, \ldots, x_{|V|}]^T\in\mathbb{R}^{|V|}$, where $x_i$ indicates the signal strength of node $i$. The smoothness of $\mathbf{x}$ on $G$ is quantified by the graph quadratic form, a measure based on the graph Laplacian $\mathbf{L} = \mathbf{D} - \mathbf{A}$,\footnote{Alternatively, one may use the normalized Laplacian $\mathbf{L} = \mathbf{I} - \tilde{\mathbf{A}}$, where $\tilde{\mathbf{A}}=\mathbf{D}^{-1/2}\mathbf{A}\mathbf{D}^{-1/2}$.} where $\mathbf{D}=\text{diag}(\mathbf{A}\mathbf{1})$ is the degree matrix and $\mathbf{1}$ denotes the all-ones vector of any dimension for simplicity. The smoothness measure $S(\mathbf{x})$ is formally given by 
\begin{equation}
    S(\mathbf{x}) = \sum_{i,j}A_{i,j}(x_i-x_j)^2 =  \mathbf{x}^T \mathbf{L} \mathbf{x}.
\end{equation}
The smaller the value of $S(\mathbf{x})$, the smoother the signal $\mathbf{x}$ on the graph \cite{shuman2013emerging, shen2021powerful}. 
Next, we define the graph Fourier transform (GFT) of a signal $\mathbf{x}$ as $\hat{\mathbf{x}} \;=\; \mathbf{U}^T \,\mathbf{x}$, where $\mathbf{U} \in \mathbb{R}^{|V|\times |V|}$ comprises the eigenvectors of $\mathbf{L}$. Specifically, from the eigen-decomposition $\mathbf{L} \;=\; \mathbf{U} \,\boldsymbol{\Lambda}\,\mathbf{U}^T,$ let $\boldsymbol{\Lambda} = \mathrm{diag}(\lambda_1,\dots,\lambda_{|V|})$ with ordered eigenvalues $\lambda_1 \le \dots \le \lambda_{|V|}$. Because the GFT is an orthogonal linear transform, the inverse GFT is 
$\mathbf{x} \;=\; \mathbf{U}\,\hat{\mathbf{x}}.$ Finally, we formalize the notions of a graph filter and graph convolution:
\vspace{-1mm}
\begin{definition}[Graph filter \cite{shuman2013emerging,shen2021powerful,xia2022fire,ortega2018graph}]
Given a graph Laplacian $\mathbf{L}$, a graph filter $H(\mathbf{L})\in\mathbb{R}^{|V|\times|V|}$ is defined as
\begin{equation}
    H(\mathbf{L}) \;=\; \mathbf{U} \,\mathrm{diag}\bigl(h(\lambda_1), \dots, h(\lambda_{|V|})\bigr)\,\mathbf{U}^T,
\end{equation}
where $h : \mathbb{R}_{+} \to \mathbb{R}$ is the frequency response function mapping each non-negative eigenvalue $\{\lambda_1,\dots,\lambda_{|V|}\}$ of $\mathbf{L}$ to $\{h(\lambda_1),\dots,h(\lambda_{|V|})\}.$
\end{definition}

\begin{definition}[Graph convolution \cite{shuman2013emerging,shen2021powerful,xia2022fire}]\label{def:graph_convolution}
For a graph signal $\mathbf{x}$ and a graph filter $H(\mathbf{L})$, it follows that
\begin{equation}
\label{graph conv}
    H(\mathbf{L})\,\mathbf{x}
    \;=\;
    \mathbf{U} \,\mathrm{diag}\bigl(h(\lambda_1), \dots, h(\lambda_{|V|})\bigr)\,\mathbf{U}^T\,\mathbf{x},
\end{equation}
which first applies the GFT to $\mathbf{x}$, transforms the resulting spectrum by $h(\cdot)$, and finally performs the inverse GFT.
\end{definition}

In signal processing, signals are typically characterized by their smoothness and low-frequency components, whereas noise is usually non-smooth and dominates at high frequencies \cite{shen2021powerful}. In this context, a significant category of filters is LPFs, which enhance the smoothness of graph signals, thereby aiding noise reduction. We formally define the LPF as follows:
\label{app:LPF}
\begin{definition}
    (LPF) \cite{shen2021powerful,ramakrishna2020user,wai2019blind}: 
For $k=1,\cdots,|V|$ and $\lambda_1\le\cdots\le\lambda_{|V|}$, the graph filter $H(\mathbf{L})$ is $k$-low-pass if and only if $\eta_k \in [0,1]$, where
\begin{equation}
\eta_k:=\frac{max\{|h(\lambda_{k+1})|, \cdots, |h(\lambda_{|V|})|\}}{min\{|h(\lambda_{1})|, \cdots, |h(\lambda_{k})|\}}.
\end{equation}
\end{definition}

\subsection{Krylov Subspaces}
Krylov subspaces compactly capture the repeated action of a large matrix on a vector. Thus, for a user signal, they approximate the spectral properties of the item similarity matrix using the directions generated from that user, instead of requiring a full matrix decomposition. They are mathematically formalized as follows.
\begin{definition}{(\textit{Krylov subspace}~\cite{saad2011numerical,trefethen1997numerical})}
Given a matrix $\mathbf{M} \in \mathbb{R}^{d \times d}$ and a vector $\mathbf{r} \in \mathbb{R}^{d}$, the $K$-th order Krylov subspace generated by $(\mathbf{M}, \mathbf{r})$ is defined as
\begin{align}
\label{pre:krly}
\mathcal{K}_K(\mathbf{M},\mathbf{r}) = \text{span}\{\mathbf{r}, \mathbf{M}\mathbf{r}, \mathbf{M}^2\mathbf{r}, \ldots, \mathbf{M}^{K-1}\mathbf{r}\},
\end{align}
where $\mathbf{M}$ is the matrix of interest, $d$ is the dimensionality of the ambient space, and $\mathbf{r}$ is the initial vector from which the subspace is generated. Here, $K$ indicates the order of the subspace and $\text{span}\{\}$ denotes the set of all linear combinations of the given vectors.
\end{definition}

\section{Methodology}
\label{section 3}
This section begins by outlining the practical challenges associated with GF-based CF methods. We then provide a comprehensive description of the proposed \textsf{Mem-GF} method, along with its theoretical analyses.
\subsection{Challenges in GF-Based CF Methods}
\label{sec 3.1}
Conventional training-free GF-based CF methods \cite{shen2021powerful,xia2022fire,liu2023personalized,park2024turbo,xia2024hierarchical} in canonical recommendation tasks begin with constructing an item similarity graph, where each item is represented as a node and the similarities between items are modeled as edges. Its adjacency matrix is generally expressed as
\begin{equation}
\label{conventional_graph}
\begin{aligned}
    \tilde{\mathbf{P}} = \tilde{\mathbf{R}}^\top\tilde{\mathbf{R}} ; \tilde{\mathbf{R}} = \mathbf{D}^{\gamma}_{\mathcal{U}}\mathbf{R}\mathbf{D}^{1-\gamma}_\mathcal{I},
\end{aligned}
\end{equation}
where $|\mathcal{U}|$ is the number of users in a dataset; $\mathbf{R} \in \mathbb{R}^{|\mathcal{U}| \times |\mathcal{I}|}$ is the interaction matrix; $\tilde{\mathbf{R}}$ is the normalized interaction matrix; $\mathbf{D}_\mathcal{U}=\text{diag}(\mathbf{R}\mathbf{1})$ and $\mathbf{D}_\mathcal{I} = \text{diag}(\mathbf{1}^\top \mathbf{R})$; $\gamma \in [0,1]$ controls the normalization along users/items; and $\tilde{\mathbf{P}}$ is the adjacency matrix representing the item similarity graph. Meanwhile, the standard notion of GF involves performing full eigenvalue decompositions of the Laplacian or adjacency matrices representing the item similarity graph \cite{ortega2018graph}, which incurs a prohibitive computational complexity of \(O(|\mathcal{I}|^3)\). To prevent such computational issues, scalable GF-based CF methods exploit several techniques, including sparse matrix operations \cite{shen2021powerful}, top-\(k\) eigenvalue components \cite{shen2021powerful, xia2022fire}, polynomial graph filters \cite{park2024turbo}, and user clustering \cite{xia2024hierarchical}. Nevertheless, they still depend on the full-size item similarity matrix during filtering. For example, GF-based CF methods \cite{shen2021powerful,xia2022fire,liu2023personalized,xia2024hierarchical} typically leverage linear LPFs and/or ideal LPFs extracting only the top-\(k\) eigenvalue components instead of fully decomposing the item similarity matrix:
\begin{equation}
\label{gfcf}
    \mathbf{s}_u = \bigl(\tilde{\mathbf{P}} + \mu \mathbf{D}^{-1/2}_U\bar{\mathbf{U}}\bar{\mathbf{U}}^T\mathbf{D}^{-1/2}_I\bigr)\mathbf{r}_u,
\end{equation}
where $\mathbf{s}_u \in \mathbb{R}^{|\mathcal{I}|}$ is the predicted preferences for user $u$; ${\mathbf{r}}_{u}$ denotes the $u$-th row of $\mathbf{R}$, treated as a column graph signal for user $u$; $\bar{\mathbf{U}}\in \mathbb{R}^{|\mathcal{I}| \times k}$ is the top-$k$ singular vectors of $\tilde{\mathbf{R}}_0$; $\tilde{\mathbf{P}}$ is the linear LPF; $\mathbf{D}^{-1/2}_U\bar{\mathbf{U}}\bar{\mathbf{U}}^T\mathbf{D}^{-1/2}_I$ is the ideal LPF of $\tilde{\mathbf{P}}$; and $\mu$ is a hyperparameter balancing between the two filters. However, due to their limited expressiveness in designing various filters having different frequency response functions, recent studies proposed to use \textit{polynomial graph filters} to produce diverse frequency response functions using polynomial functions \cite{park2024turbo, shen2021powerful, qin2025polycf}. As representative work \cite{park2024turbo}, the prediction score for user $u$ via polynomial GF is formulated as
\begin{equation}
    \label{poly-gf}
    \mathbf{s}_{u} = \sum_{n=1}^N{a_{n}}\tilde{\mathbf{P}}^n\mathbf{r}_{u},
\end{equation}
where $\{a_n\}_{n=1}^N$ are the polynomial coefficients; $N$ is the maximum order of the matrix polynomial; and $\sum_{n=1}^N a_n \tilde{\mathbf{P}}^n$ represents the polynomial graph filter on the item similarity graph. It was proven that the polynomial graph filter $\sum_{n=1}^N{a_{n}}\tilde{\mathbf{P}}^n$ has a frequency response function of $\sum_{n=1}^N{a_{n}}(1-\lambda)^n$ \cite{park2024turbo,shen2021powerful,qin2025polycf}.

While conventional GF-based CF methods have adopted {\it user-specific batch} strategies (see \eqref{gfcf} and \eqref{poly-gf}), they still suffer from expensive costs for constructing or applying a \textit{universal} ({\it i.e.}, user-agnostic) graph filter across all users based on the large item similarity matrix $\tilde{\mathbf{P}}$ (see Fig. \ref{fig;intro}). For instance, a simple linear LPF (\textit{i.e.}, $\tilde{\mathbf{P}}$) having a frequency response function of $h(\lambda) = 1-\lambda$ has a dimension of $(|\mathcal{I}|, |\mathcal{I}|)$. Moreover, ideal LPFs or higher-order polynomial filters can further increase memory footprint and computation. As a result, devising a memory-efficient polynomial GF technique that can effectively operate under resource-constrained environments becomes a critical research challenge. This raises a natural question: \textit{``How can we optimize batch processing for each user to enhance memory efficiency in GF-based recommendations?"} To answer this question, we elaborate on the proposed \textsf{Mem-GF} method in the following subsection.

\subsection{Proposed Method: \textsf{Mem-GF}}
\label{sec:proposed_method}

To directly overcome the challenges outlined in Section~\ref{sec 3.1}, we propose \textsf{Mem-GF}, a memory-efficient CF method that leverages \textit{polynomial GF} without incurring the prohibitive memory costs of conventional GF-based CF approaches. The core idea is to use the \textit{Krylov subspace} for each user signal, thereby approximating matrix-vector operations without explicitly forming or storing large matrices such as $\tilde{\mathbf{P}}$. The Krylov subspace $\mathcal{K}_K(\tilde{\mathbf{P}}, \mathbf{r}_u)$ in \eqref{pre:krly}, defined as the span of vectors $\{\mathbf{r}_u, \tilde{\mathbf{P}}\mathbf{r}_u, \tilde{\mathbf{P}}^2\mathbf{r}_u, \dots,$ $\tilde{\mathbf{P}}^{K-1}\mathbf{r}_u\}$, captures progressively transformed directions of user $u$'s interaction vector under the repeated action of $\tilde{\mathbf{P}}$. By restricting computations to $\mathcal{K}_K(\tilde{\mathbf{P}},\mathbf{r}_u)$, we approximate the spectral properties of $\tilde{\mathbf{P}}$ in the user-specific Krylov subspace. To construct its orthonormal basis, we utilize the Lanczos algorithm~\cite{lanczos1950iteration}, which avoids direct eigenvalue/eigenvector computation. The schematic overview of the \textsf{Mem-GF} method is illustrated in Fig. \ref{fig:overview}. We summarize the pseudocode of \textsf{Mem-GF} in Algorithm \ref{alg:memgf}.
\subsubsection{Methodological Details}\label{sec:method_detail} \textsf{Mem-GF} first refines the normalized interaction matrix $\tilde{\mathbf{R}}$ by applying the Hadamard power, effectively regularizing the differences among elements in $\tilde{\mathbf{R}}$. Notably, unlike prior approaches \cite{park2024turbo, kim2025leveraging, park2025criteria} that apply the Hadamard power to the item similarity matrix $\tilde{\mathbf{P}}$, our method operates directly on $\tilde{\mathbf{R}}$ to avoid the need for explicit storage of $\tilde{\mathbf{P}}$:
\begin{equation}
  \label{eq:barR_def}
  \bar{\mathbf{R}} \;=\; \tilde{\mathbf{R}}^{\circ s},
\end{equation}
where $s$ is a hyperparameter for adjusting $\tilde{\mathbf{R}}$ and $\bar{\mathbf{R}}$ is the adjusted interaction matrix. Thus, the item similarity matrix is defined as $\tilde{\mathbf{P}}=\bar{\mathbf{R}}^\top \bar{\mathbf{R}}$ accordingly, but it is never explicitly stored. Next, let us turn to describing the Lanczos algorithm, which is central to \textsf{Mem-GF}. We construct a Krylov subspace $\mathcal{K}_K(\tilde{\mathbf{P}}, \mathbf{r}_u)$ with $K$ bases, where $\mathbf{r}_u$ represents user $u$'s interaction vector.\footnote{Note that \textsf{Mem-GF} is flexible in using the $u$-th row of $\bar{\mathbf{R}}$, denoted by $\bar{\mathbf{r}}_u$, as graph signals.} This process generates a user-specific orthonormal Krylov basis $\mathbf Q_u\in\mathbb R^{|\mathcal I|\times K}$ and a low-dimensional tridiagonal matrix $\mathbf T_u\in\mathbb R^{K\times K}$. The matrix $\mathbf T_u$ represents the projection of $\tilde{\mathbf P}$ onto the Krylov subspace:
\begin{equation}
\label{eq:qtq}
\mathbf{T}_u = \mathbf{Q}_u^\top \tilde{\mathbf{P}}\mathbf{Q}_u.
\end{equation}
More precisely, finite-step Lanczos may involve a residual term outside the constructed subspace; however, the polynomial action on the starting vector $\mathbf r_u$ is exactly represented for degrees $N<K$ under exact arithmetic, as formalized in Theorem~\ref{thm:polynomial_approx}.

\begin{figure}[t]
    \centering
    \includegraphics[width=1\columnwidth]{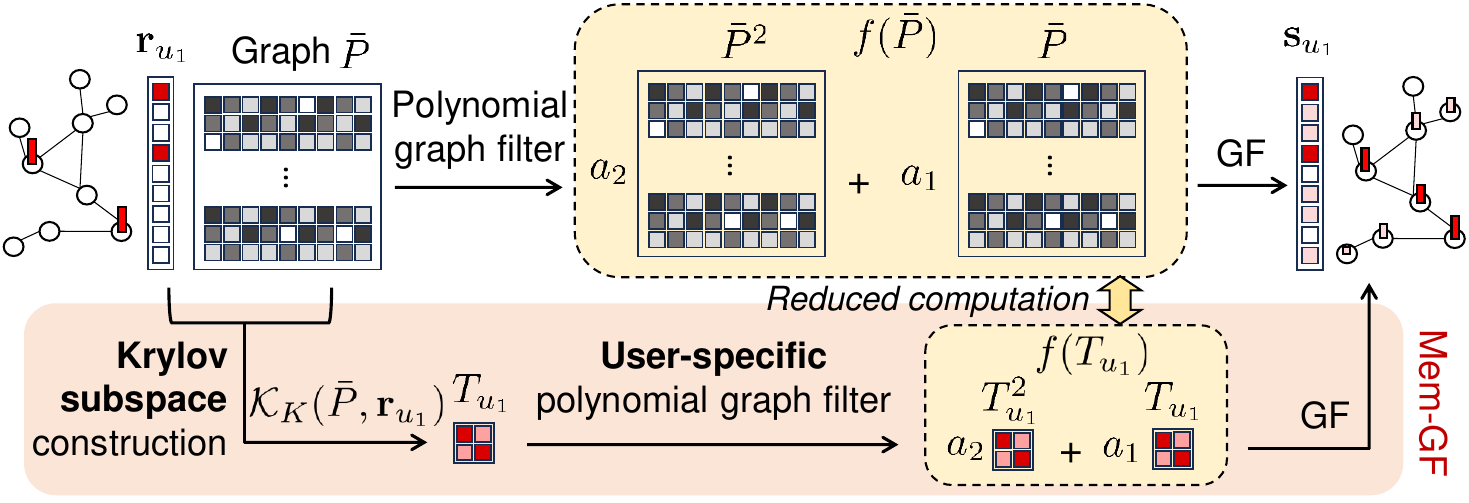}
    \vspace{-3mm}
    \caption{The schematic overview of \textsf{Mem-GF}.}
    \label{fig:overview}
\end{figure}
\subsubsection{Core Idea of \textsf{Mem-GF}} 

\begin{algorithm}[t]
\caption{\textsf{Mem-GF}}
\label{alg:memgf}
\begin{algorithmic}[1]
  \renewcommand{\algorithmicrequire}{\textbf{Input:}}
  \renewcommand{\algorithmicensure}{\textbf{Output:}}
  \REQUIRE interaction matrix $\mathbf{R} \in \mathbb{R}^{|\mathcal{U}|\times|\mathcal{I}|}$, set of users $\mathcal{U}$, set of items $\mathcal{I}$
  \ENSURE $\{\mathbf{s}_u\}_{u \in \mathcal{U}}$
  \STATE $\tilde{\mathbf{R}} = \mathbf{D}^{\gamma}_{\mathcal{U}}\mathbf{R}\mathbf{D}^{1-\gamma}_\mathcal{I}$
  \STATE $\bar{\mathbf{R}} = \tilde{\mathbf{R}}^{\circ s}$
  \FOR{each $u \in \mathcal{U}$}
    \STATE /* Lanczos iteration */
    \STATE $\beta_0 \gets 0,\ \mathbf{q}_{u,0} \gets \mathbf{0},\ \mathbf{q}_{u,1} \gets \mathbf{r}_u / \|\mathbf{r}_u\|_2$
    \FOR{$j \gets 1$ to $K$}
      \STATE $\mathbf{w}_u \gets \bar{\mathbf{R}}^\top(\bar{\mathbf{R}}\,\mathbf{q}_{u,j}) - \beta_{j-1}\,\mathbf{q}_{u,j-1}$
      \STATE $\alpha_j \gets \mathbf{q}_{u,j}^\top \mathbf{w}_u$
      \STATE $\mathbf{w}_u \gets \mathbf{w}_u - \alpha_j\,\mathbf{q}_{u,j}$
      \STATE $\beta_j \gets \|\mathbf{w}_u\|_2$
      \IF{$\beta_j \neq 0$}
        \STATE $\mathbf{q}_{u,j+1} \gets \mathbf{w}_u / \beta_j$
      \ENDIF
    \ENDFOR
    \STATE Form $\mathbf{Q}_u$ with $\mathbf{q}_{u,1},\ldots,\mathbf{q}_{u,K}$ and $\mathbf{T}_u$ with $\{\alpha_j,\beta_j\}$
    \STATE $f(\mathbf{T}_u) \gets \sum_{n=0}^N a_n\,(\mathbf{T}_u)^n$
    \STATE $\mathbf{s}_u \gets \|\mathbf{r}_u\|_2\;\mathbf{Q}_u\;f(\mathbf{T}_u)\;\mathbf{e}_1$
  \ENDFOR
  \RETURN $\{\mathbf{s}_u\}_{u \in \mathcal{U}}$
\end{algorithmic}
\end{algorithm}
The Lanczos algorithm \cite{lanczos1950iteration} proceeds as follows:

\begin{itemize}
    \item \textbf{Initialization}: The algorithm begins with the normalized vector $\mathbf{q}_{u,1} = \mathbf{r}_u / \| \mathbf{r}_u \|_2$, where $\mathbf{r}_u$ is the interaction vector for user $u$. Additional parameters, including $\beta_0 = 0$ and $\mathbf{q}_{u,0} = \mathbf{0}$, required for iterations are also initialized.
    \item \textbf{Iteration}: At each iteration $j$, the algorithm computes the projection of $\tilde{\mathbf{P}}$ onto the current basis:
\begin{align*}
    \mathbf{w}_u &= \tilde{\mathbf{P}}\mathbf{q}_{u,j} - \beta_{j-1}\mathbf{q}_{u,j-1}, \\
    \alpha_j &= \mathbf{q}_{u,j}^\top \mathbf{w}_u.
\end{align*}
The residual $\mathbf{w}_u$ is then orthogonalized and normalized:
\begin{align*}
    \mathbf{w}_u &\leftarrow \mathbf{w}_u - \alpha_j \mathbf{q}_{u,j}, \\
    \beta_j &= \|\mathbf{w}_u\|_2, \\
    \mathbf{q}_{u,j+1} &= \mathbf{w}_u / \beta_j \quad \text{(if } \beta_j \neq 0\text{)}.
\end{align*}

  \item \textbf{Termination}: The iterations continue until $j = K$ or $\beta_j = 0$, while constructing the tridiagonal matrix $\mathbf{T}_u$ using the coefficients $\{\alpha_j\}_{j=1}^K$ and $\{\beta_j\}_{j=1}^{K-1}$:  
\begin{equation}
\mathbf{T}_u = \begin{bmatrix}
\alpha_1 & \beta_1 & 0 & \cdots & 0 \\
\beta_1 & \alpha_2 & \beta_2 & \ddots & \vdots \\
0 & \beta_2 & \alpha_3 & \ddots & 0 \\
\vdots & \ddots & \ddots & \ddots & \beta_{K-1} \\
0 & \cdots & 0 & \beta_{K-1} & \alpha_K \\
\end{bmatrix},
\end{equation}
\end{itemize}
where $\mathbf{q}_{u,j} \in \mathbb{R}^{|\mathcal{I}|}$ is the $j$-th vector in the Lanczos basis for user $u$; $\alpha_j$ represents the local projection coefficient of $\tilde{\mathbf{P}}\,\mathbf{q}_{u,j}$ onto $\mathbf{q}_{u,j}$; and $\beta_j$ is the norm of the resulting residual, used to orthogonalize $\mathbf{q}_{u,j+1}$ \textit{w.r.t.} the previous basis vectors. It is worth noting that, instead of directly storing $\tilde{\mathbf{P}}$, \textsf{Mem-GF} calculates the matrix-vector product $\tilde{\mathbf{P}}\mathbf{Q}_u$ using only the adjusted interaction matrix $\bar{\mathbf{R}}$ as $\tilde{\mathbf{P}}\mathbf{Q}_u = \bar{\mathbf{R}}^\top (\bar{\mathbf{R}}\mathbf{Q}_u)$. Since $\bar{\mathbf{R}} \in \mathbb{R}^{|\mathcal{U}| \times |\mathcal{I}|}$ is much smaller and sparser than the full item similarity matrix $\tilde{\mathbf{P}} = \bar{\mathbf{R}}^{\top}\bar{\mathbf{R}}$, this reduces the memory requirement from storing a full-size item--item matrix to storing only $\bar{\mathbf{R}}$ and the compact Krylov quantities, making \textsf{Mem-GF} viable for large-scale datasets (refer to Figs. \ref{fig;intro} and \ref{fig:overview}).

To see how \textsf{Mem-GF} computes polynomial graph filters, note that $\mathbf Q_u$ forms a user-specific Krylov subspace, while $\mathbf T_u$ is the projected representation of $\tilde{\mathbf P}$ in that subspace. By the Lanczos initialization, we have $\mathbf{Q}_u^\top\mathbf{r}_u = \|\mathbf{r}_u\|_2 \mathbf{e}_1$. Therefore, for a polynomial graph filter $f(\tilde{\mathbf P})=\sum_{n=0}^{N}a_n\tilde{\mathbf P}^n$ with $N<K$, the Krylov representation gives
\begin{equation}
\label{eq_signal_1}
    f(\tilde{\mathbf{P}})\mathbf{r}_u
    =
    \|\mathbf{r}_u\|_2\,\mathbf{Q}_u
    \left(\sum_{n=0}^N a_n\mathbf{T}_u^n\right)\mathbf{e}_1,
\end{equation}
as formally shown in Theorem~\ref{thm:polynomial_approx}. This relation enables polynomial graph filtering without explicitly forming $\tilde{\mathbf P}$.

Thus, our \textsf{Mem-GF} method is finally formulated as follows:
\begin{equation}
\label{eq:final_score}
\mathbf{s}_u = \|\mathbf{r}_u\|_2 \mathbf{Q}_u f(\mathbf{T}_u) \mathbf{e}_1,
\end{equation}
where $f(\mathbf{T}_u)$ represents the user-specific polynomial graph filter in the Krylov
subspace and is expressed as
\begin{equation}
\label{eq:poly_filter}
f(\mathbf{T}_u) = \sum_{n=0}^N a_n \mathbf{T}_u^n.
\end{equation}
This formulation ensures that both the graph filter and the signal are \textbf{user-specific}, enabling personalized and efficient recommendation by allocating memory in a batch-based manner for each user's operations. By reducing the polynomial graph filter computation into {\it smaller, user-specific batches}, our method is capable of further optimizing memory usage, ensuring scalability even with limited hardware resources. We next highlight additional advantages offered by the proposed \textsf{Mem-GF} method.
\begin{remark}
First, the user-specific matrices $\mathbf{Q}_u$ and $\mathbf{T}_u$ can be pre-computed and cached, enabling near-instantaneous inference through lightweight matrix multiplications. This design facilitates deployment in real-world systems with stringent latency requirements. Second, once the Krylov basis $(\mathbf{Q}_u,\,\mathbf{T}_u)$ is constructed through $K$ Lanczos iterations, an arbitrary polynomial graph filter of order $K' < K$ can be calculated by utilizing only the first $K'$ Krylov basis vectors, without further access to the interaction matrix $\bar{\mathbf{R}}$. Consequently, validation-based filter selection becomes much cheaper than recomputing each filter independently. Such capability is especially valuable in personalized recommendation scenarios, as optimal polynomial filters can vary substantially across individual users \cite{liu2023personalized,park2025criteria}.
\end{remark}

Next, we are interested in analyzing how many Krylov subspace bases are sufficient to guarantee {\it no approximation error}. To address this, we provide a theoretical analysis by establishing the following theorem.

\begin{theorem}[Zero-Error Guarantees in Krylov Subspaces]
\label{thm:polynomial_approx}
Let $\tilde{\mathbf{P}}\in\mathbb{R}^{|\mathcal{I}|\times|\mathcal{I}|}$ be a symmetric matrix ({\it e.g.}, an item similarity matrix), and let $\mathbf{r}_u\in\mathbb{R}^{|\mathcal{I}|}$ be the signal for user $u$. Run $K$ iterations of the Lanczos algorithm on the pair $(\tilde{\mathbf P},\mathbf r_u)$, yielding an orthonormal Krylov basis $\mathbf Q_u\in\mathbb R^{|\mathcal I|\times K}$ and a tridiagonal projected matrix $\mathbf T_u=\mathbf Q_u^\top\tilde{\mathbf P}\mathbf Q_u\in\mathbb R^{K\times K}$. Consider a polynomial graph filter $f(\tilde{\mathbf{P}}) \;=\;\sum_{n=0}^{N} a_n \,\tilde{\mathbf{P}}^n$ of degree $N.$ If $N < K$, then the Lanczos algorithm in \textsf{Mem-GF} incurs no approximation error under exact arithmetic. 
\end{theorem}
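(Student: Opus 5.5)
The claim to prove is that $f(\tilde{\mathbf P})\mathbf r_u=\|\mathbf r_u\|_2\,\mathbf Q_u f(\mathbf T_u)\mathbf e_1$ whenever $N<K$. By linearity in the coefficients $a_n$, it suffices to show that for every $n$ with $0\le n\le K-1$,
\begin{equation*}
\tilde{\mathbf P}^n\mathbf q_{u,1}=\mathbf Q_u\mathbf T_u^n\mathbf e_1 .
\end{equation*}
Multiplying by $\|\mathbf r_u\|_2$, using $\mathbf q_{u,1}=\mathbf r_u/\|\mathbf r_u\|_2$, and summing against the $a_n$ then gives the theorem.

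The first step is to write the Lanczos recurrences of Algorithm~\ref{alg:memgf} in matrix form. I will assume that no breakdown occurs before step $K$; the breakdown case is handled below. The recurrences are
\begin{equation*}
\tilde{\mathbf P}\mathbf q_{u,j}=\beta_{j-1}\mathbf q_{u,j-1}+\alpha_j\mathbf q_{u,j}+\beta_j\mathbf q_{u,j+1},
\end{equation*}
and collecting them for $j=1,\dots,K$ yields
\begin{equation*}
\tilde{\mathbf P}\mathbf Q_u=\mathbf Q_u\mathbf T_u+\beta_K\mathbf q_{u,K+1}\mathbf e_K^\top .
\end{equation*}
Orthonormality of $\mathbf Q_u$ in exact arithmetic is the standard consequence of the symmetry of $\tilde{\mathbf P}$ together with the three-term recurrence. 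From this identity, left-multiplication by $\mathbf Q_u^\top$ recovers $\mathbf T_u=\mathbf Q_u^\top\tilde{\mathbf P}\mathbf Q_u$, consistent with \eqref{eq:qtq}.

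The main step is an induction on $n$. The case $n=0$ is immediate, since $\mathbf Q_u\mathbf e_1=\mathbf q_{u,1}$. For the inductive step, assume the identity holds for some $n\le K-2$. Then
\begin{equation*}
\tilde{\mathbf P}^{n+1}\mathbf q_{u,1}=\tilde{\mathbf P}\mathbf Q_u\mathbf T_u^n\mathbf e_1=\mathbf Q_u\mathbf T_u^{n+1}\mathbf e_1+\beta_K\mathbf q_{u,K+1}\,\mathbf e_K^\top\mathbf T_u^n\mathbf e_1 .
\end{equation*}
The key observation is that $\mathbf T_u$ is tridiagonal, so $\mathbf T_u^n$ has bandwidth $n$. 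Hence $\mathbf T_u^n\mathbf e_1$ is supported on its first $n+1$ entries. Since $n+1\le K-1<K$, the entry $\mathbf e_K^\top\mathbf T_u^n\mathbf e_1$ vanishes and the residual term drops out. This bandwidth argument is the crux, and it is exactly where the hypothesis $N<K$, i.e.\ $n\le K-1$, enters. The bandwidth claim itself follows by a quick sub-induction: multiplying a vector supported on indices $\{1,\dots,m\}$ by a tridiagonal matrix gives a vector supported on $\{1,\dots,m+1\}$.

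The remaining obstacle is early breakdown, where $\beta_j=0$ for some $j<K$. In that case $\mathcal K_j(\tilde{\mathbf P},\mathbf r_u)$ is $\tilde{\mathbf P}$-invariant and the residual term is identically zero. The identity then holds for all $n$ using the truncated basis $\mathbf Q_u\in\mathbb R^{|\mathcal I|\times j}$ and the $j\times j$ matrix $\mathbf T_u$, and the argument above goes through without the band-support step. I will state this case explicitly, noting that Algorithm~\ref{alg:memgf} effectively stops at step $j$. Combining the two cases and summing over $n\le N$ gives \eqref{eq_signal_1}, so the score in \eqref{eq:final_score} coincides exactly with $f(\tilde{\mathbf P})\mathbf r_u$.
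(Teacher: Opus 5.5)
Your proof is correct and has the same skeleton as the paper's. Both reduce, by linearity in the $a_n$, to the monomial identity $\tilde{\mathbf P}^n\mathbf r_u=\|\mathbf r_u\|_2\mathbf Q_u\mathbf T_u^n\mathbf e_1$ for $0\le n\le N<K$, and then sum.

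The difference is that you actually prove that identity. The paper only asserts it ``by the Krylov construction,'' citing $\tilde{\mathbf P}^n\mathbf r_u\in\mathcal K_K(\tilde{\mathbf P},\mathbf r_u)$. Membership alone gives only $\tilde{\mathbf P}^n\mathbf r_u=\mathbf Q_u\mathbf Q_u^\top\tilde{\mathbf P}^n\mathbf r_u$. Identifying the coordinates $\mathbf Q_u^\top\tilde{\mathbf P}^n\mathbf q_{u,1}$ with $\mathbf T_u^n\mathbf e_1$ is the real content. Your induction on $\tilde{\mathbf P}\mathbf Q_u=\mathbf Q_u\mathbf T_u+\beta_K\mathbf q_{u,K+1}\mathbf e_K^\top$ supplies exactly this. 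The band-support argument kills $\mathbf e_K^\top\mathbf T_u^n\mathbf e_1$ for $n\le K-2$, which pinpoints where $N<K$ is used.

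Your route also yields more than the paper's argument:
\begin{itemize}
\item At $n=K$ the same computation leaves the residual $\beta_K(\mathbf e_K^\top\mathbf T_u^{K-1}\mathbf e_1)\mathbf q_{u,K+1}=\beta_1\beta_2\cdots\beta_K\,\mathbf q_{u,K+1}$. This is nonzero unless $\mathcal K_K(\tilde{\mathbf P},\mathbf r_u)$ is already $\tilde{\mathbf P}$-invariant, so the condition $N<K$ cannot be relaxed in general.
\item It shows the identity rests only on the three-term recurrence. Orthonormality is needed just to identify the algorithm's $\mathbf T_u$ with $\mathbf Q_u^\top\tilde{\mathbf P}\mathbf Q_u$.
\item It covers early breakdown, which the paper's setup (a $K$-column orthonormal $\mathbf Q_u$) tacitly excludes.
\end{itemize}
The paper's version is shorter but leaves its central step unjustified.
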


\begin{proof}
The Lanczos procedure constructs $\mathbf Q_u=[\mathbf q_{u,1},\ldots,\mathbf q_{u,K}]$ with $\mathbf q_{u,1}=\mathbf r_u/\|\mathbf{r}_u\|_2$ and the projected matrix $\mathbf T_u=\mathbf Q_u^\top\tilde{\mathbf P}\mathbf Q_u$. By the Krylov construction, for every $0\le n\le N<K$, the vector $\tilde{\mathbf P}^n\mathbf r_u$ belongs to $\mathcal K_K(\tilde{\mathbf P},\mathbf r_u)$ and can be represented in the basis $\mathbf Q_u$ as
\[
   \tilde{\mathbf{P}}^n\mathbf{r}_u
   =
   \|\mathbf{r}_u\|_2\mathbf{Q}_u\mathbf{T}_u^n\mathbf{e}_1.
\]
Therefore,
\[
\begin{aligned}
   f(\tilde{\mathbf{P}})\mathbf{r}_u
   &=
   \sum_{n=0}^{N} a_n \tilde{\mathbf{P}}^n\mathbf{r}_u  \\
   &=
   \|\mathbf{r}_u\|_2\mathbf{Q}_u
   \left(\sum_{n=0}^{N}a_n\mathbf{T}_u^n\right)\mathbf{e}_1
   =
   \|\mathbf{r}_u\|_2\mathbf{Q}_u f(\mathbf{T}_u)\mathbf{e}_1.
\end{aligned}
\]
Thus, for all $N<K$, the equality holds with no approximation error. This completes the proof of Theorem \ref{thm:polynomial_approx}.
\end{proof}

 Theorem~\ref{thm:polynomial_approx} implies that, whenever the polynomial degree $N$ is less than the Krylov subspace size $K$, \textsf{Mem-GF} exactly reproduces $f(\tilde{\mathbf P})\mathbf r_u$ in the original space under exact arithmetic. Thus, setting $K=N+1$ is sufficient to avoid approximation error in the algebraic Krylov representation. In practice, finite-precision computation or choosing $N\ge K$ may affect this exactness; therefore, we tune $K$ and $N$ on the validation set. Separately, severe sparsity or noise in user signals may degrade empirical recommendation quality rather than invalidate the algebraic guarantee itself. We provide experimental verification of this theoretical claim in Section \ref{app:exp_thm}.

\subsubsection{High-Order Polynomial Graph Filter Designs}
Conventional polynomial GF-based CF methods~\cite{park2024turbo,kim2025leveraging,park2025criteria} restrict polynomial filters up to third-order terms to bypass the problem of memory constraints. However, explicitly constructing and storing $\tilde{\mathbf{P}}$ or computing higher-order powers ({\it e.g.}, $\tilde{\mathbf{P}}^3$) still requires memory scaling as $\mathcal{O}(|\mathcal{I}|^2)$, which is infeasible for large sets of items. In contrast, \textsf{Mem-GF} overcomes the memory bottleneck issue using the user-specific tridiagonal matrix \(\mathbf{T}_u\) in \eqref{eq:qtq}. Since \(K \ll |\mathcal{I}|\) for $\mathbf{T}_u \in \mathbb{R}^{K\times K}$, it becomes computationally trivial to raise \(\mathbf{T}_u\) to higher powers, thereby allowing the order of polynomial filters to be significantly higher.

\begin{figure}[t]
    \centering
    \includegraphics[width=1\columnwidth]{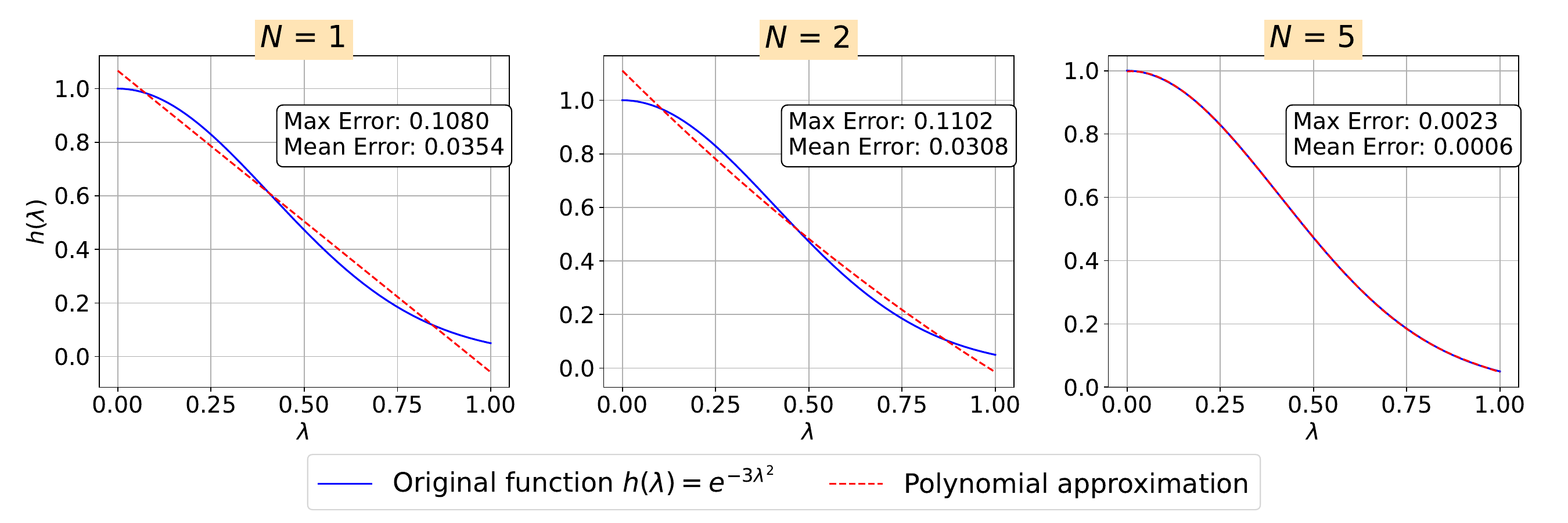}
   \caption{Comparison of the function \( h(\lambda) = e^{-\gamma \lambda^2} \) and its polynomial approximations for different polynomial orders (\( N = 1, 2, 5 \)). The maximum and mean errors are annotated within each subplot to indicate approximation accuracy.}
    \label{fig;polyfit}
\end{figure}
This flexibility allows \textsf{Mem-GF} to approximate a broad spectrum of target frequency responses. Specifically, we consider employing graph filters having more complex frequency responses. This leads us to propose a new polynomial graph filter that approximates the Gaussian filter \cite{ito2000gaussian, li2022g} that is widely used in image and graph domains, whose frequency response is expressed as $h(\lambda) = e^{-\tau \lambda^2}$, where $\tau$ is the damping factor in the Gaussian filter.\footnote{In addition to the Gaussian filter, any sophisticated filters can be designed by appropriately adjusting the polynomial coefficients.} Due to the fact that the polynomial graph filter $\sum_{n=1}^N{a_{n}}\tilde{\mathbf{P}}^n$ has a frequency response function of $\sum_{n=1}^N{a_{n}}(1-\lambda)^n$ \cite{park2024turbo, shen2021powerful,qin2025polycf}, we are ready to numerically solve a non-linear least squares (LS) problem to find the polynomial coefficients $\{a_n\}_{n=1}^N$ in \eqref{eq:poly_filter} within the maximum polynomial order $N$, as in \cite{park2024turbo}. As illustrated in Fig.~\ref{fig;polyfit}, performing higher-order polynomial approximations ({\it e.g.}, \(N = 5\)) significantly reduces the error compared to lower-order polynomials ({\it e.g.}, \(N = 1, 2\)). This demonstrates the clear advantage of high-order polynomials in closely approximating the original function \(h(\lambda) = e^{-\tau \lambda^2}\). Therefore, by setting $N = 5$ and $\tau = 3$,\footnote{One can arbitrarily set the hyperparameter $\tau$. Here, increasing $\tau$ results in the further utilization of the low-pass frequency components, and vice versa.} we are capable of discovering a new \textit{predefined} polynomial graph filter having a frequency response function as $h(\lambda) = e^{-3 \lambda^2}$ as follows:
\begin{equation}
\label{5th_pgf}
    f(\mathbf{T}_u) =  8\mathbf{T}_u + 3.1\mathbf{T}_u^2 -11.2\mathbf{T}_u^3 +6.2\mathbf{T}_u^4-1.1\mathbf{T}_u^5.
\end{equation}
Finally, in our study, we utilize three predefined LPFs used in \cite{park2024turbo, kim2025leveraging} as well as our newly devised fifth-order polynomial filter, which are listed as follows:
\begin{itemize}
    \item \textsf{Mem-GF-1} ($f(\mathbf{T}_u) = \mathbf{T}_u$): The linear LPF with the first polynomial order, whose frequency response is $h(\lambda) = 1-\lambda$.
    \item \textsf{Mem-GF-2} ($f(\mathbf{T}_u) = 2\mathbf{T}_u-\mathbf{T}_u^2$): The second-order LPF, whose frequency response is $h(\lambda) = 1-\lambda^2$
    \item \textsf{Mem-GF-3} ($f(\mathbf{T}_u) = \mathbf{T}_u + 0.01({-\mathbf{T}_u^3+10\mathbf{T}_u^2-29\mathbf{T}_u})$): The third-order LPF which approximates ideal LPF (\textit{i.e.}, $h(\lambda) = \mathbf{1}_{\lambda\le p}$ with cutoff frequency $p$) \cite{park2024turbo}.
    \item \textsf{Mem-GF-5} ($f(\mathbf{T}_u) = 8\mathbf{T}_u + 3.1\mathbf{T}_u^2 -11.2\mathbf{T}_u^3 +6.2\mathbf{T}_u^4-1.1\mathbf{T}_u^5$): The fifth-order LPF, whose frequency response approximates the Gaussian filter $h(\lambda) = e^{-3 \lambda^2}$.
\end{itemize}
The optimal filter and hyperparameters of \textsf{Mem-GF} can be found via hyperparameter tuning on the validation set. Although the aforementioned four graph filters are utilized in our experiments, higher-order graph filters can be utilized and any arbitrary functions can be designed via the same procedures.

\subsubsection{Complexity Analysis} 
\label{sec 3.2.4}
We first theoretically analyze the space (\textit{i.e.}, memory) complexity of \textsf{Mem-GF} during preprocessing stage. A central advantage of \textsf{Mem-GF} is its ability to avoid explicitly forming the item similarity matrix
\(\tilde{\mathbf{P}} = \bar{\mathbf{R}}^\top \bar{\mathbf{R}}\in\mathbb{R}^{|\mathcal{I}|\times |\mathcal{I}|}\). Storing \(\tilde{\mathbf{P}}\) would na\"ively demand memory of
\(\mathcal{O}(|\mathcal{I}|^2)\), which can be reduced to $\mathrm{nnz}(\tilde{\mathbf{P}})$ in sparse format. However, $\tilde{\mathbf{P}}$ in \eqref{conventional_graph} is often far denser than $\tilde{\mathbf{R}}$, as each element contains two-hop item-neighbor information. Instead, \textsf{Mem-GF}
stores only $\bar{\mathbf{R}}$, $\mathbf{Q}_u$, and $\mathbf{T}_u$ (see \eqref{eq:barR_def} and \eqref{eq:qtq}). First, the adjusted interaction matrix in a sparse format has the dimension of \(\bar{\mathbf{R}}\in\mathbb{R}^{|\mathcal{U}|\times |\mathcal{I}|}\). Thus, its space requirement is
    \(\mathcal{O}\bigl(\mathrm{nnz}(\mathbf{R})\bigr)\). Since \(\bar{\mathbf{R}}\) is obtained by a Hadamard power of \(\tilde{\mathbf{R}}\), it shares the same sparsity pattern as \(\mathbf{R}\), implying that \(\mathrm{nnz}(\bar{\mathbf{R}}) = \mathrm{nnz}(\mathbf{R})\).
Next, we need to store temporary user-specific Lanczos vectors \(\mathbf{Q}_u\in\mathbb{R}^{|\mathcal{I}|\times K}\) and the tridiagonal matrix
    \(\mathbf{T}_u\in\mathbb{R}^{K\times K}\), which have the dimension linear in \(|\mathcal{I}|K\) and \(K^2\), respectively. Importantly, from the fact that
    they are constructed \emph{per user (or per batch)}, the algorithm can overwrite or free this memory once each 
    user's final predictions are computed. Hence, the overall memory footprint is finally given by
\begin{equation}
    \mathcal{O}\bigl(\mathrm{nnz}(\mathbf{R}) + |\mathcal{I}|K + K^2),
\end{equation}
rather than \(\mathcal{O}(|\mathcal{I}|^2)\) or \(\mathcal{O}(\mathrm{nnz}(\tilde{\mathbf{P}}))\). This design substantially alleviates the memory bottleneck, making it
feasible to handle large-scale datasets on standard hardware resources.

Next, we delve into theoretically analyzing the time (\textit{i.e.}, computational) complexity of \textsf{Mem-GF} during preprocessing. The computational cost of \textsf{Mem-GF} can be broken down into two core stages: 
(i) constructing the Krylov subspace for each user via the Lanczos algorithm and 
(ii) performing polynomial GF in the reduced space.

\begin{enumerate}[leftmargin=*]
    \item \emph{Krylov subspace construction (Lanczos algorithm).} For user \(u\), let 
    \(\mathbf{r}_u \in \mathbb{R}^{|\mathcal{I}|}\) be the interaction vector. We conduct \(K\) iterations of the Lanczos 
    algorithm to build the matrix \(\mathbf{Q}_u\in\mathbb{R}^{|\mathcal{I}|\times K}\) and the tridiagonal matrix \(\mathbf{T}_u\in\mathbb{R}^{K\times K}\).
    Each iteration requires a matrix-vector multiplication with \(\tilde{\mathbf{P}}=\bar{\mathbf{R}}^\top\bar{\mathbf{R}}\). Rather than forming
    \(\tilde{\mathbf{P}}\) explicitly, we compute $\tilde{\mathbf{P}}\,\mathbf{q}_{u,j} \;=\; \bar{\mathbf{R}}^\top\!\bigl(\bar{\mathbf{R}}\,\mathbf{q}_{u,j}\bigr)$.
    Each multiplication with \(\bar{\mathbf{R}}\) or \(\bar{\mathbf{R}}^\top\) can be executed in
    \(\mathcal{O}\!\bigl(\mathrm{nnz}(\bar{\mathbf{R}})\bigr)\) time by exploiting sparsity. Consequently, one Lanczos
    iteration costs \(\mathcal{O}\!\bigl(\mathrm{nnz}(\bar{\mathbf{R}})\bigr)\). Over \(K\) steps, the cost per user is
    \(\mathcal{O}(K\,\mathrm{nnz}(\bar{\mathbf{R}}))\), which results in
    \(\mathcal{O}(|\mathcal{U}|\,K\,\mathrm{nnz}(\bar{\mathbf{R}}))\) for $|\mathcal{U}|$ total users.

    \item \emph{Polynomial GF and score reconstruction.} Given \(\mathbf{Q}_u\) and \(\mathbf{T}_u\), we next compute a polynomial graph filter
\( f(\mathbf{T}_u) = \sum_{n=0}^N a_n\,\mathbf{T}_u^n \)
in the reduced space. Specifically, the prediction score $\mathbf{s}_u$ is
\(\|\mathbf{r}_u\|_2\,\mathbf{Q}_u\,f(\mathbf{T}_u)\,\mathbf{e}_1 \in \mathbb{R}^{|\mathcal{I}|}\).
Because \(\mathbf{T}_u\) is of size \(K\times K\), powering \(\mathbf{T}_u\) up to order \(N\) typically takes
\(\mathcal{O}(K^2)\) up to \(\mathcal{O}(K^3)\) depending on implementation \cite{saad2003iterative}. In practice, since \(K\) is small 
(\textit{e.g.}, $K=N+1$ due to Theorem \ref{thm:polynomial_approx}), this step is minor.  
Finally, multiplying \(\mathbf{Q}_u f(\mathbf{T}_u)\mathbf{e}_1\) back into the original dimension
costs \(\mathcal{O}(|\mathcal{I}|K)\) per user. Aggregated across all users, the reconstruction needs $\mathcal{O}\bigl(|\mathcal{U}|\;|\mathcal{I}|\;K\bigr)$.
\end{enumerate}
Therefore, the total time complexity of \textsf{Mem-GF} is given by
\begin{equation}
    \mathcal{O}\Bigl(|\mathcal{U}|\,K\,\bigl(\mathrm{nnz}(\mathbf{R})+|\mathcal{I}|\bigr)\Bigr).
\end{equation}


\section{Experimental Evaluations}
\label{section 4}

In this section, we systematically conduct extensive experiments to address the key research questions (RQs) outlined below:

\begin{itemize}    
    \item \textbf{RQ1:} How memory-efficient is \textsf{Mem-GF} during preprocessing compared to benchmark GF-based recommendation methods?
    \item \textbf{RQ2:} How efficient is \textsf{Mem-GF} during inference in terms of runtime and memory consumption compared to benchmark GF-based recommendation methods?
    \item \textbf{RQ3:} How does the memory on \textsf{Mem-GF} scale {\it w.r.t.} the number of users, items, and interactions?
    \item \textbf{RQ4:} How much does \textsf{Mem-GF} improve recommendation accuracy over benchmark recommendation methods?
    \item \textbf{RQ5:} How does the number of Krylov subspace bases affect the performance of \textsf{Mem-GF}?
    \item \textbf{RQ6:} How do types of polynomial filters affect the performance of \textsf{Mem-GF}?
    \item \textbf{RQ7:} How sensitive is \textsf{Mem-GF} to variations in key hyperparameters?
\end{itemize}

\vspace{-2mm}
\subsection{Experimental Settings}
\begin{table}[t]
\footnotesize
  \captionsetup{skip=2pt}
  \caption{The statistics of three datasets.}
  \begin{tabular}{ccccccl}
    \toprule
    Dataset & \# of users & \# of items & \# of interactions & Density \\ 
    \midrule
    Yelp& 31,668 & 38,048 & 1,561,406 &  0.130\%\\
    Amazon-book & 52,643 & 91,599& 2,984,108 & 0.062\% \\
    ML-20M & 138,493 & 26,744 & 20,000,263 & 0.540\% \\
  \bottomrule
\end{tabular}
\vspace{0mm}
\label{table:datasets}
\end{table}
\noindent\textbf{Datasets.} We carry out experiments on the three benchmark datasets widely adopted for evaluating the performance of recommender systems for CF, which include Yelp, Amazon-book, and MovieLens-20M (ML-20M) \cite{berg2017graph, ying2018graph,zheng2018spectral,wang2020disentangled,wang2019neural,he2020lightgcn,he2017neural,he2016vbpr}. Notably, ML-20M is a large-scale, real-world dataset, consisting of 20 million user interactions. The statistics of the three datasets are summarized in Table \ref{table:datasets}.

\noindent\textbf{Competitors.} We compare \textsf{Mem-GF} against twenty-one benchmark CF methods, which can be grouped into two categories. The first group comprises canonical benchmark CF methods built on commonly used architectures, including MF-based methods (MF-BPR~\cite{rendle2009bpr} and NeuMF~\cite{he2017neural}), autoencoder-based methods (Multi-DAE, Multi-VAE~\cite{liang2018variational}, RecVAE~\cite{shenbin2020recvae}, and SVD-AE~\cite{hong2024svd}), GCN-based methods (NGCF~\cite{wang2019neural}, LightGCN~\cite{he2020lightgcn}, DGCF~\cite{wang2020disentangled}, SGL~\cite{wu2021self}, SimpleX \cite{mao2021simplex}, and SGDE \cite{peng2024less}), generative model-based methods (CFGAN~\cite{chae2018cfgan}, DiffRec, L-DiffRec~\cite{wang2023diffusion}, and FlowCF~\cite{liu2025flow}), and link propagation-based method (LinkProp and LinkProp-Multi \cite{fu2022revisiting}). The second group consists of recent GF-based CF methods, including GF-CF~\cite{shen2021powerful}, PGSP~\cite{liu2023personalized}, FPSR~\cite{wei2023fine}, Turbo-CF~\cite{park2024turbo}, and HiGSP~\cite{xia2024hierarchical}. Certain recent GF-based approaches, including PolyCF \cite{qin2025polycf} and LanczosNet \cite{liao2019lanczosnet}, were excluded from our empirical comparison, as the source code of PolyCF is not publicly available and LanczosNet was not specifically designed for CF tasks with recommendation datasets involving tens of thousands of nodes.

\noindent\textbf{Evaluation protocols.} For the Yelp and Amazon-book datasets, we follow the same data splits and performance as those in prior studies \cite{he2020lightgcn,wang2019neural,park2024turbo,liu2023personalized}, directly using each paper's reported best performance for consistency. For ML-20M, we preprocess the dataset by splitting interactions into a ratio of 80:20 for training and testing, and then further reserve 10\% of the training set as a validation set under a 10-core setting ({\it i.e.}, filter out users with fewer than ten interactions). To assess the performance of the top-$K$ recommendations, we use widely adopted benchmark metrics from the literature \cite{berg2017graph, ying2018graph,zheng2018spectral,wang2020disentangled,wang2019neural,he2020lightgcn,he2017neural,he2016vbpr}, including \textit{recall} and \textit{normalized discounted cumulative gain (NDCG)}, with $K$ set to 20 by default. For each metric, we report the average results over 10 independent runs, except for deterministic methods \cite{steck2019embarrassingly, shen2021powerful,liu2023personalized,park2024turbo,hong2024svd,xia2024hierarchical} (including \textsf{Mem-GF}). Unless otherwise specified, all runtime reports for GF-based methods refer to their preprocessing time.

\begin{table*}[!t]\centering
\footnotesize
\captionsetup{skip=2pt}
\caption{Comparison of memory (in GB) and time (in seconds) consumption during preprocessing among seven training-free GF-based recommendation methods across three datasets. RAM and VRAM values represent the peak memory usage recorded during execution. Entries marked as OOM denote out-of-memory errors, and entries marked as ``--'' denote unavailable GPU measurements caused by GPU buffer OOM or CPU-only fallback. The best and second-best performers are marked in \textbf{bold} and \underline{underlined}, respectively. The performance of Turbo-CF and \textsf{Mem-GF} is evaluated according to the maximum order of the polynomial graph filters used.}
\label{resource_consumption}
\begin{tabular}{c|c|c|c|c|c|c|>{\centering\arraybackslash}p{4.8mm}>{\centering\arraybackslash}p{4.8mm}>{\centering\arraybackslash}p{4.8mm}|>{\centering\arraybackslash}p{4.8mm}>{\centering\arraybackslash}p{4.8mm}>{\centering\arraybackslash}p{4.8mm}>{\centering\arraybackslash}p{4.8mm}}
\toprule[1pt]
\multirow{2}{*}{\textbf{Dataset}}
& \multirow{2}{*}{\textbf{Method}}
& \multirow{2}{*}{EASE}
& \multirow{2}{*}{GF-CF}
& \multirow{2}{*}{PGSP}
& \multirow{2}{*}{HiGSP}
& \multirow{2}{*}{FPSR}
& \multicolumn{3}{c|}{Turbo-CF}
& \multicolumn{4}{c}{\textsf{Mem-GF}}\\
\cmidrule(lr){8-10}\cmidrule(lr){11-14}
& & & & & & & 1 & 2 & 3 & 1 & 2 & 3 & 5\\
\midrule[1pt]
\multirow{5}{*}{\textbf{Yelp}}
& \textbf{VRAM (A)}
  & 31.2 & 10.9 & --& --& 9.3 & 9.2 & 25.7 & 25.7 & 1.9 & 2.4 & 2.7 & 3.3\\
& \textbf{RAM (B)}
  & 0.8 & 2.1 & 74.1 & OOM & 0.7 & 0.7 & 0.7 & 0.7 & 0.7 & 0.7 & 0.7 & 0.7\\
& \cellcolor[gray]{0.9}\textbf{Total memory (A)+(B)}
  & \cellcolor[gray]{0.9}32.0 & \cellcolor[gray]{0.9}13.0 & \cellcolor[gray]{0.9}74.1 & \cellcolor[gray]{0.9}OOM
  & \cellcolor[gray]{0.9}10.0 & \cellcolor[gray]{0.9}\underline{9.9} & \cellcolor[gray]{0.9}26.4 & \cellcolor[gray]{0.9}26.4
  & \cellcolor[gray]{0.9}\textbf{2.6} & \cellcolor[gray]{0.9}3.1 & \cellcolor[gray]{0.9}3.4 & \cellcolor[gray]{0.9}4.0\\
& \cellcolor[gray]{0.9}\textbf{Time}
  & \cellcolor[gray]{0.9}15.68 & \cellcolor[gray]{0.9}7.0 & \cellcolor[gray]{0.9}1398.0 & \cellcolor[gray]{0.9}--  & \cellcolor[gray]{0.9}58.7 & \cellcolor[gray]{0.9}\underline{4.2} & \cellcolor[gray]{0.9}140.2 & \cellcolor[gray]{0.9}420.7
  & \cellcolor[gray]{0.9}\textbf{1.5} & \cellcolor[gray]{0.9}2.9 & \cellcolor[gray]{0.9}4.3 & \cellcolor[gray]{0.9}6.5\\
& \textbf{GPU OOM}
  &  &  & \checkmark & \checkmark &  &  &  &  &  &  &  & \\
\midrule[1pt]
\multirow{5}{*}{\textbf{Amazon-book}}
& \textbf{VRAM (A)}
  & --& 34.5 & --& --& 32.4 & 45 & --& --& 4.4 & 5.2 & 5.9 & 7.3\\
& \textbf{RAM (B)}
  & OOM & 0.7 & OOM & OOM & 2.0 & 0.7 & OOM & OOM & 0.7 & 0.7 & 0.7 & 0.7\\
& \cellcolor[gray]{0.9}\textbf{Total memory (A)+(B)}
  & \cellcolor[gray]{0.9}OOM & \cellcolor[gray]{0.9}35.2 & \cellcolor[gray]{0.9}OOM & \cellcolor[gray]{0.9}OOM
  & \cellcolor[gray]{0.9}\underline{34.4} & \cellcolor[gray]{0.9}45.7 & \cellcolor[gray]{0.9}OOM & \cellcolor[gray]{0.9}OOM
  & \cellcolor[gray]{0.9}\textbf{5.1} & \cellcolor[gray]{0.9}5.9 & \cellcolor[gray]{0.9}6.6 & \cellcolor[gray]{0.9}8.0\\
& \cellcolor[gray]{0.9}\textbf{Time}
  & \cellcolor[gray]{0.9}--& \cellcolor[gray]{0.9}35.7 & \cellcolor[gray]{0.9}--& \cellcolor[gray]{0.9}--  & \cellcolor[gray]{0.9}157.1 & \cellcolor[gray]{0.9}\underline{35.5} & \cellcolor[gray]{0.9}--& \cellcolor[gray]{0.9}--  & \cellcolor[gray]{0.9}\textbf{6.6} & \cellcolor[gray]{0.9}11.4 & \cellcolor[gray]{0.9}16.4 & \cellcolor[gray]{0.9}26.4\\
& \textbf{GPU OOM}
  & \checkmark & \checkmark & \checkmark & \checkmark &  &  & \checkmark & \checkmark &  &  &  & \\
\midrule[1pt]
\multirow{5}{*}{\textbf{ML-20M}}
& \textbf{VRAM (A)}
  & -- & 19.7 & --& --& 11.5 & -- & -- & -- & 4.6 & 4.8 & 4.8 & 5.6\\
& \textbf{RAM (B)}
  & 12.6 & 1.3 & OOM & OOM & 4.5 & 6.6 & 9.6 & 12.8 & 1.2 & 1.2 & 1.2 & 1.2\\
& \cellcolor[gray]{0.9}\textbf{Total memory (A)+(B)}
  & \cellcolor[gray]{0.9}12.6 & \cellcolor[gray]{0.9}21.0 & \cellcolor[gray]{0.9}OOM & \cellcolor[gray]{0.9}OOM
  & \cellcolor[gray]{0.9}15.9 & \cellcolor[gray]{0.9}\underline{6.6} & \cellcolor[gray]{0.9}9.6 & \cellcolor[gray]{0.9}12.8
  & \cellcolor[gray]{0.9}\textbf{5.8} & \cellcolor[gray]{0.9}6.0 & \cellcolor[gray]{0.9}6.0 & \cellcolor[gray]{0.9}6.8\\
& \cellcolor[gray]{0.9}\textbf{Time}
  & \cellcolor[gray]{0.9}223.0 & \cellcolor[gray]{0.9}\textbf{59.8} & \cellcolor[gray]{0.9}--& \cellcolor[gray]{0.9}--  & \cellcolor[gray]{0.9}121.1 & \cellcolor[gray]{0.9}187.6 & \cellcolor[gray]{0.9}2735.5 & \cellcolor[gray]{0.9}7432.8
  & \cellcolor[gray]{0.9}\underline{102.2} & \cellcolor[gray]{0.9}156.6 & \cellcolor[gray]{0.9}209.1 & \cellcolor[gray]{0.9}314.1\\
& \textbf{GPU OOM}
  & \checkmark &  & \checkmark & \checkmark &  & \checkmark & \checkmark & \checkmark &  &  &  & \\
\bottomrule[1pt]
\end{tabular}
\vspace{-1mm}
\end{table*}
\noindent\textbf{Implementation details.} We adopt the optimal hyperparameters for competitors through extensive hyperparameter tuning on the validation set. For all experiments, we use default hyperparameters $\gamma$, $s$, and polynomial order $N$ for \textsf{Mem-GF}, unless otherwise stated: $(\gamma, s, N)$ = (0.55, 0.9, 5) for Yelp; (0.5, 0.9, 2) for Amazon-book; and (0.5, 1.1, 2) for ML-20M. We set $K = N + 1$ for all datasets. All training-based methods are optimized using the Adam optimizer \cite{kingma2015adam} with a batch size of 1024. Our experiments are conducted on a single server equipped with Intel (R) 12-Core (TM) i7-9700K CPUs @ 3.60 GHz, NVIDIA GeForce RTX A6000 GPUs with \textbf{48GB VRAM}, and \textbf{128GB of RAM}. 

All GF-based CF algorithms \cite{shen2021powerful,park2024turbo,steck2019embarrassingly,liu2023personalized,xia2024hierarchical} are implemented using PyTorch tensors \cite{paszke2019pytorch} to ensure a consistent experimental environment. Whenever a method exceeds the size of GPU VRAM, we revert to CPU-only computation leveraging system RAM to complete the process without disrupting the pipeline. For FPSR \cite{wei2023fine}, we use $\tau=0.3$ as a default hyperparameter to generate 6 partitions. For Turbo-CF, we identically use the three predefined filters presented in \cite{park2024turbo, kim2025leveraging}. Since the filtering matrix is constructed as $\bar{\mathbf R}^{\top}\bar{\mathbf R}$, using $\bar{\mathbf r}_u$ also aligns the user signal with the same normalization and adjustment applied to the item similarity matrix. This alignment is particularly helpful for the ML-20M dataset, where interactions are highly concentrated on popular items, because it mitigates popularity-dominated signal amplification.

To evaluate the scalability of our \textsf{Mem-GF} in Section~\ref{section 4.3}, we generated sets of synthetic interaction matrices with varying dimensions and sparsity levels. In particular, we employ the random matrix generation function provided by the \texttt{scipy.random} module in the SciPy package \cite{virtanen2020scipy} to create these matrices. By default, the number of users, items, and nonzero entries (\textit{i.e.}, the number of interactions) was set to 5,000, 5,000, and 225,000, respectively. In order to assess scalability across a wide range of problem sizes, the experiments are carried out according to the following setup:
\begin{itemize}[leftmargin=*]
    \item \textbf{Varying the number of users/items:} The number of users and items is systematically increased to each of the following values: 1,000, 5,000, 10,000, 15,000, 20,000, 30,000, 40,000, 50,000, 80,000, and 100,000.
    \item \textbf{Varying the number of interactions:} The number of nonzero entries (\textit{i.e.}, $\mathrm{nnz}(R)$) is scaled to 225,000, 1,125,000, 2,250,000, 4,500,000, 6,750,000, 9,000,000, and 11,250,000.
\end{itemize}
For each configuration, we record the corresponding memory usage and runtime of our \textsf{Mem-GF}. These systematic variations allow us to quantify how the performance of our method scales \textit{w.r.t.} the number of users, items, and interactions.

\subsection{Preprocessing Efficiency Analysis (RQ1)} 

Table \ref{resource_consumption} summarizes the comparison of \textsf{Mem-GF} with existing GF-based recommendation methods in terms of memory usage and runtime on the three benchmark datasets during preprocessing ({\it i.e.}, offline computation). Table \ref{tab:runtime} shows the runtime comparison among \textsf{Mem-GF} and three representative CF methods that require model training (NGCF, LightGCN, and DiffRec). Our key observations are as follows:

\begin{enumerate}[label=(\roman*)]
    \item Leveraging the low-dimensional Krylov subspace, \textsf{Mem-GF} significantly reduces both memory consumption and computation time compared to other GF-based CF competitors.
    \item On Amazon-book, where the set of items is relatively large, most GF-based CF competitors encounter OOM errors on the single GPU device. In contrast, \textsf{Mem-GF} efficiently operates within around 5GB of VRAM under the first-order polynomial filter, demonstrating up to a \textbf{574\% memory advantage} and a \textbf{438\% runtime speedup} compared to the second-best performers. Even with higher-order polynomial filters, \textsf{Mem-GF} maintains a VRAM usage of around 8GB.
    \item On ML-20M, the extremely large number of interactions (20M) triggers buffer OOM issues for many competing methods when performing sparse matrix multiplications on the GPU, forcing them to rely solely on CPU computations and resulting in prolonged runtime. On the contrary, \textsf{Mem-GF} seamlessly handles high-order filters without such memory bottlenecks.
    \item Table~\ref{tab:runtime} demonstrates the benefits of \textsf{Mem-GF} over representative CF methods that require model training in terms of runtime. Because \textsf{Mem-GF} is entirely training-free, it circumvents the costly iterative parameter optimization loops. Notably, on the datasets having a large amount of interactions, such as ML-20M, \textsf{Mem-GF} significantly outperforms two-tower models based on pairwise optimization ({\it e.g.}, NGCF and LightGCN), highlighting its practical value in scenarios where both accuracy and speed are critical.
\end{enumerate}
\vspace{-3mm}

\begin{table}[t]
\footnotesize
\centering
\caption{Runtime (in seconds) of \textsf{Mem-GF} and representative competitors that require model training. Here, runtime represents training time for NGCF, LightGCN, and DiffRec, while it refers to processing time for \textsf{Mem-GF}.}
\label{tab:runtime}
\vspace{-1mm}
\begin{tabular}{lccccc}
\hline
\textbf{} & \textbf{NGCF}   & \textbf{LightGCN}  & \textbf{DiffRec} & \textbf{\textsf{Mem-GF}} \\ \hline
Yelp & $1.5 \times 10^4$ & $1.1 \times 10^4$ &  $1.4 \times 10^4$ & $\bf 1.5$\\
Amazon-book &$7.1 \times 10^4$ & $4.6 \times 10^4$ &$3.7 \times 10^4$  & $\bf 6.6$\\
ML-20M & $4.6 \times 10^5$& $3.1 \times 10^5$ & $1.1 \times 10^4$ & $\bf 102.2$\\
Model training & \cmark & \cmark & \cmark & \xmark \\
\hline
\end{tabular}
\end{table}
\begin{table}[t!]
    \footnotesize
    \centering
    \caption{Inference time and peak GPU memory usage for representative GF-based methods and \textsf{Mem-GF} with different polynomial orders on the Yelp dataset.}
    \vskip -0.1in
    \begin{tabular}{lccc}
    \toprule
    \textbf{Method}    & \textbf{Order} & \textbf{Time (s)} & \textbf{VRAM (GB)} \\
    \midrule
    GF-CF              & -              & 0.1197            & 28.79             \\
    Turbo-CF           & 1              & 0.1348            & 6.60              \\
    \textsf{Mem-GF-1}  & 1              & 0.0044            & 0.57              \\
    \textsf{Mem-GF-2}  & 2              & 0.0077            & 0.57              \\
    \textsf{Mem-GF-3}  & 3              & 0.0123            & 0.57              \\
    \textsf{Mem-GF-5}  & 5              & 0.0145            & 0.57              \\
    \bottomrule
    \end{tabular}
    \label{tab:inference_comparison}
\end{table}

\subsection{Inference Efficiency Analysis (RQ2)} 
Table~\ref{tab:inference_comparison} reports the single-batch inference cost ({\it i.e.}, the convolution with graph signals) on the Yelp dataset, as other datasets exhibit similar trends. The comparison assumes that the graph filter ($H(\mathbf{L})$ in \eqref{graph conv}) has already been precomputed and focuses only on runtime and peak GPU memory usage during inference.

\begin{enumerate}[label=(\roman*)]
    \item By confining the graph convolution entirely within a low-dimensional Krylov subspace, \textsf{Mem-GF} completes inference in $0.0044$ seconds, yielding up to a 26.2$\times$ speedup over GF-CF and Turbo-CF, both of which require convolution with a full $|\mathcal{I}|\times|\mathcal{I}|$ graph filter. Even with the fifth-order filter, \textsf{Mem-GF} maintains low latency, supporting real-time recommendation systems.
    \item Since \textsf{Mem-GF} avoids explicit construction and storage of the full-size $|\mathcal{I}|\times|\mathcal{I}|$ matrix, it achieves over a 10.5$\times$ reduction in peak VRAM usage compared to Turbo-CF. The memory footprint remains nearly unaffected by the polynomial order because the compact Krylov basis dominates the filtering cost, whereas GF-CF requires dense filter storage for ideal LPF computations.
\end{enumerate}

\begin{figure}[t]
    \centering
    \includegraphics[width=0.99\columnwidth]{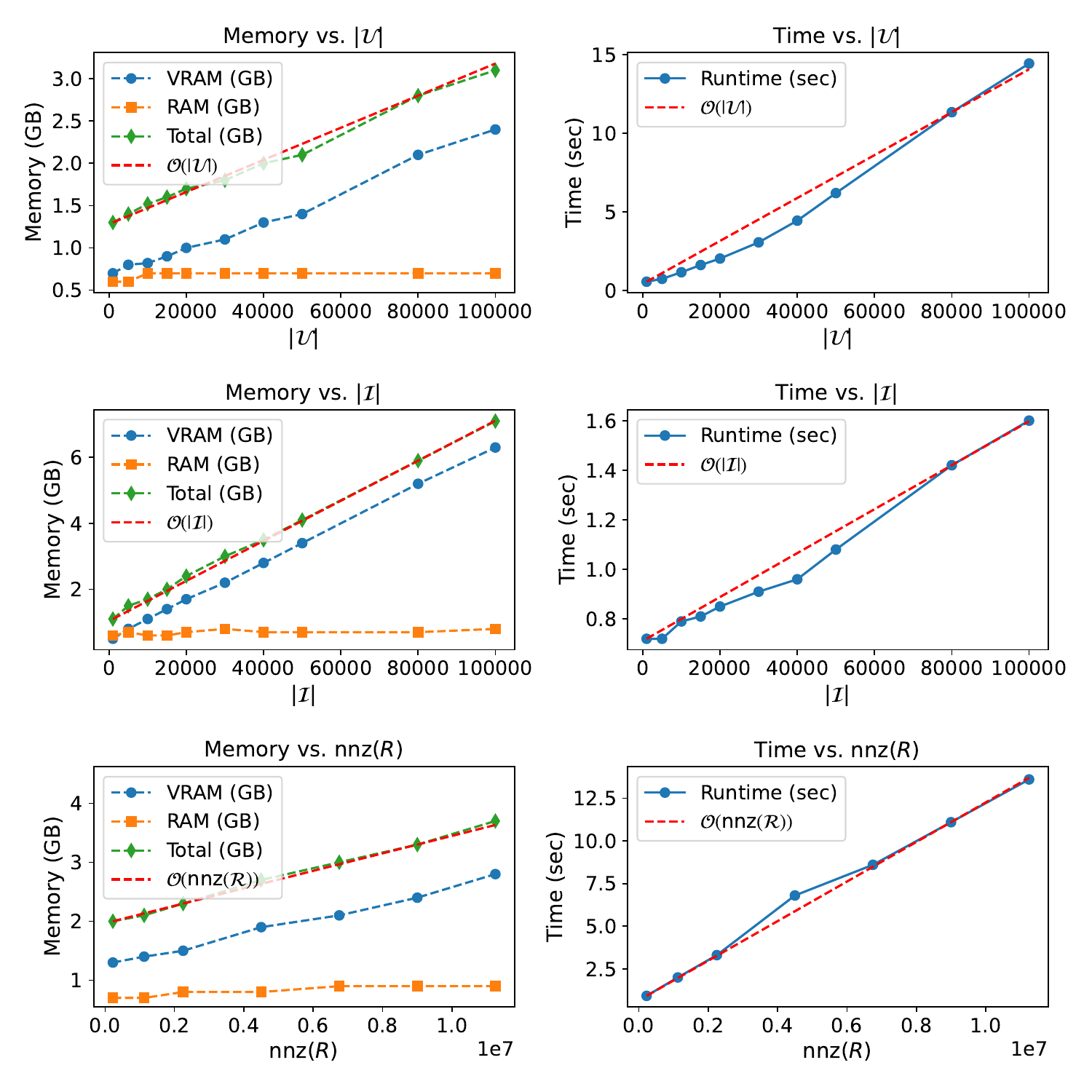}
    \vspace{-3mm}
   \caption{Memory usage (left) and runtime (right) of \textsf{Mem-GF} as the number of users ($|\mathcal{U}|$), items ($|\mathcal{I}|$), and non-zero interactions ($\mathrm{nnz}(R)$) increases. Here, the linear asymptotic line is depicted as a red-dashed line for reference.}
   \vspace{-3mm}
    \label{fig;complexity}
\end{figure}

\subsection{Scalability Analysis (RQ3)}
\label{section 4.3}

To empirically validate our theoretical complexity analysis in Section~\ref{sec 3.2.4}, we conduct experiments on synthetic datasets (\textit{i.e.}, interaction matrices) with varying scales of users, items, and interaction densities.  Specifically, in Fig. \ref{fig;complexity}, we present how \textsf{Mem-GF}'s memory usage and runtime for preprocessing scale as $|\mathcal{U}|$, $|\mathcal{I}|$, and $\mathrm{nnz}(R)$ grow. By default, the number of users, items, and interactions was set to 5,000, 5,000, and 225,000, respectively. Furthermore, in Fig. \ref{fig;complexity_others}, we compare \textsf{Mem-GF} with two representative GF-based CF competitors (GF-CF and Turbo-CF) when $|\mathcal{I}|$ increases. For a fair comparison with GF-CF, both Turbo-CF and \textsf{Mem-GF} employ the first-order polynomial graph filter. We highlight the following observations:

\begin{enumerate}[label=(\roman*)]
    \item As shown in the left side of Fig. \ref{fig;complexity}, \textsf{Mem-GF} exhibits memory usage growing nearly \textbf{linearly} with $|\mathcal{U}|$, $|\mathcal{I}|$, and $\mathrm{nnz}(R)$. Notably, the memory footprint is significantly reduced compared to the case of forming the item similarity matrix of size $|\mathcal{I}|\times|\mathcal{I}|$, allowing \textsf{Mem-GF} to make full use of GPU acceleration even on large datasets. We note that the size of RAM does not tend to scale with the increasing data dimension as long as VRAM is sufficiently available.
    
    \item As shown in the right side of Fig. \ref{fig;complexity}, the runtime of \textsf{Mem-GF} also scales \textbf{linearly} as each data dimension grows. This validates the scalability of \textsf{Mem-GF}. Moreover, since \textsf{Mem-GF} efficiently operates within the GPU VRAM, its runtime only takes less than 11 seconds even when $\mathrm{nnz}(R)$ increases up to $10^7$ via the exploitation of the GPU resource.

    \item In Fig. \ref{fig;complexity_others}, we observe that \textsf{Mem-GF} is substantially more scalable in both memory and runtime, compared to GF-CF and Turbo-CF. In particular, GF-CF incurs excessive memory overhead due to its dense ideal LPF matrix having dimensionality of $(|\mathcal{I}|,|\mathcal{I}|)$, thus yielding superlinear complexity in practice. 
\end{enumerate}
These results also indicate how sparsity affects \textsf{Mem-GF}: increasing $\mathrm{nnz}(\mathbf{R})$ raises runtime through sparse matrix-vector products, but the growth remains close to linear because the full item similarity matrix is never formed.
\begin{figure}[t]
    \centering
    \includegraphics[width=0.95\columnwidth]{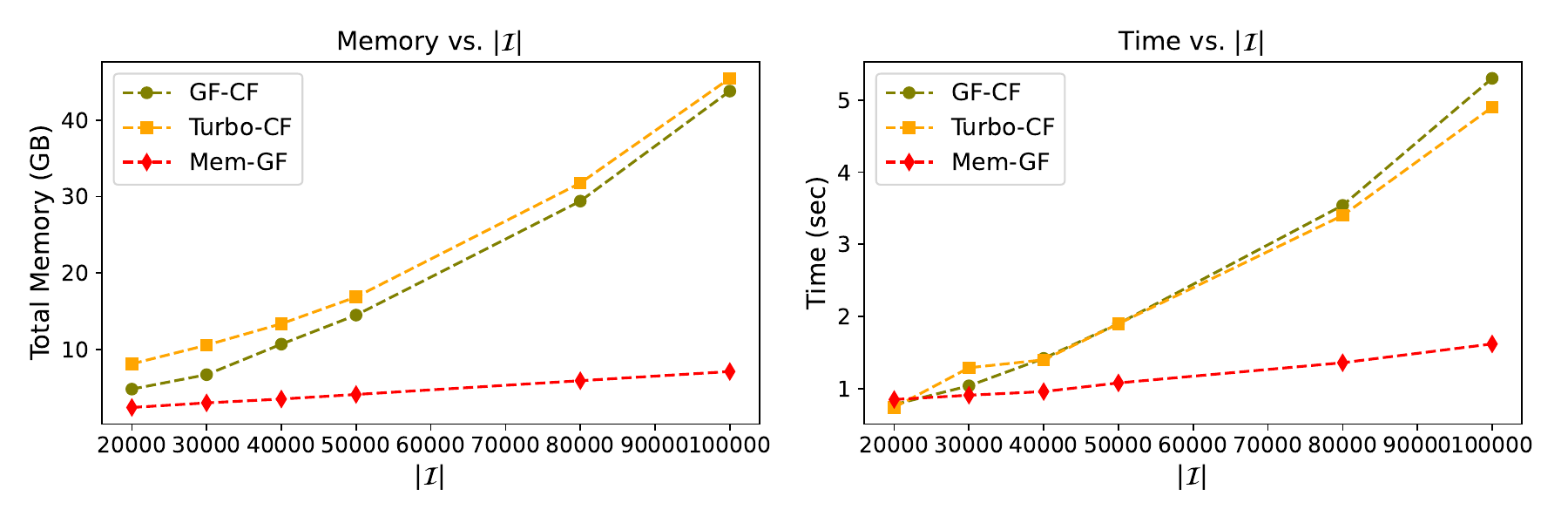}
    \vspace{-3mm}
   \caption{Total memory (left) and runtime (right) comparison among representative GF-based CF methods {\it w.r.t.} $|\mathcal{I}|$.}
    \label{fig;complexity_others}
\end{figure}
\begin{table}[!t]\centering
\setlength\tabcolsep{5.0pt}
\scriptsize
  \captionsetup{skip=2pt}
  \caption{Performance comparison among \textsf{Mem-GF} and competitors. The best and second-best performers are highlighted in bold and underlined, respectively. The improvements of \textsf{Mem-GF} over the best competitors are all statistically significant with $p$-values $\leq 0.01$.}
  \label{tab:acc results}
  \begin{tabular}{c|cc|cc|cc}
    \toprule[1pt]
    \multicolumn{1}{c|}{}&\multicolumn{2}{c|}{Yelp}&\multicolumn{2}{c|}{Amazon-book}&\multicolumn{2}{c}{ML-20M}\\
    \cmidrule{1-7}
           Method &   Recall& NDCG&  Recall& NDCG& Recall & NDCG\\
    \midrule[1pt]
    MF-BPR  &   0.0433 & 0.0354 &  0.0250  & 0.0190 &   0.0219   & 0.1108    \\
    NeuMF   &   0.0451 & 0.0363 &  0.0258  & 0.0200 &    0.0288  &    0.1437\\
    NGCF    &   0.0579 & 0.0477 &   0.0344 & 0.0263 &   0.0307   &   0.1409  \\
    LightGCN&   0.0649 & 0.0530 &   0.0411 & 0.0315 &    0.0402  &  0.1702   \\
    DGCF   & 0.0654  & 0.0534 & 0.0422  &0.0324  &   OOM   &  OOM    \\
    SGL  &  0.0675 & 0.0555&  0.0478 & 0.0379&    0.0388&       0.1677    \\
    SGDE &  0.0693 & 0.0563&  0.0483 & 0.0391&    0.0398&       0.1683   \\
    Multi-DAE &   0.0562 & 0.0434 &   0.0410 & 0.0322 &    0.0385  & 0.1666    \\
    Multi-VAE &   0.0584 & 0.0450 &   0.0407 & 0.0315 &   0.0402   & 0.1702    \\
    RecVAE &   0.0579 & 0.0432 &   0.0400 & 0.0302 &   0.0414   & 0.1732    \\
    SVD-AE  &  0.0681 & 0.0572 & 0.0602 & 0.0479 & 0.0536  & 0.2379    \\
    SimpleX  &   0.0701&  0.0575&  0.0583  & 0.0400 &  0.0304    &    0.1451 \\ 
    LinkProp  &   0.0676 &  0.0559 &  0.0684  & 0.0559 &     OOM &   OOM  \\ 
    LinkProp-Multi   &   0.0690 &  0.0571  &  0.0721  & 0.0588 &    OOM  &  OOM   \\ 
    DiffRec   &  0.0656 & 0.0552 &  0.0514&0.0418  & \underline{0.0549}     & 0.2258    \\
    L-DiffRec  &  0.0614 & 0.0518 &  0.0502 & 0.0409 &  0.0470   &0.1837    \\
    FlowCF  &  0.0686 & 0.0587 &  0.0497 & 0.0431 &  0.0531   &\underline{0.2340 }   \\
    \cmidrule{1-7}
    EASE  & 0.0657  &0.0552 & 0.0710 &0.0567  &   0.0522  & 0.2264  \\
    GF-CF   &   0.0697 & 0.0571 &  0.0710  & 0.0584 &  0.0487   & 0.2076  \\    
    PGSP  & \textbf{0.0710}  & 0.0583& 0.0712 & 0.0587 & OOM   & OOM   \\
    FPSR &  0.0703 & \underline{0.0584} & 0.0715 & 0.0588 &0.0493  &  0.2104  \\
    Turbo-CF & 0.0693 & 0.0574 & \underline{0.0730} & \underline{0.0611} & 0.0534     &0.2299   \\
    HiGSP  &  OOM & OOM& OOM & OOM & OOM  &  OOM  \\
    \rowcolor{gray!20}\textsf{\textbf{Mem-GF (Ours)}}  & \underline{0.0705}  &\textbf{ 0.0586}& \textbf{0.0753} &\textbf{0.0620}  &   \textbf{ 0.0566}  & \textbf{0.2402}    \\
    \bottomrule[1pt]
  \end{tabular}
  \vspace{-1mm}
\end{table}

\subsection{Recommendation Accuracy (RQ4)}
Table~\ref{tab:acc results} summarizes the recommendation accuracy of \textsf{Mem-GF} and various competitors across three distinct datasets. The main findings are as follows:
\begin{enumerate}[label=(\roman*)]
    \item Despite its exceptional computational efficiency, \textsf{Mem-GF} achieves the highest accuracy for all datasets (except the recall on Yelp), underscoring its robustness across diverse data characteristics.
    \item In particular, the gains of \textsf{Mem-GF} over training-dependent models are indeed substantial across all datasets.
    \item \textsf{Mem-GF} consistently exhibits gains over Turbo-CF, another polynomial GF approach. On the Amazon-book dataset, Turbo-CF was restricted only to the first-order linear filter to avoid OOM issues (see Table \ref{resource_consumption}), whereas \textsf{Mem-GF} can freely use higher-order filters without incurring any resource scarcity issues on a single device. This design can improve accuracy empirically, not because memory reduction itself guarantees better predictions, but because expressive polynomial filters can be selected within the user-specific Krylov subspace without being constrained by the storage cost of full-size graph filters.
\end{enumerate}
\vspace{-3mm}
\begin{table}[t]
\centering

\caption{Recall performance across different values of $K$ and polynomial filters. Here, \textsf{Mem-GF-k} denotes an ablation model that omits the Krylov basis construction in \textsf{Mem-GF}. Cases where $K \leq N$ are marked in gray.}
\label{tab:theorem_exp}
\scriptsize
\begin{tabular}{c|cccc}
\toprule
\multicolumn{5}{c}{\textbf{Yelp}} \\
\midrule
$K$ & \textsf{Mem-GF-1} & \textsf{Mem-GF-2} & \textsf{Mem-GF-3} & \textsf{Mem-GF-5} \\
\midrule
1 & \cellcolor{gray!20}0.0035 & \cellcolor{gray!20}0.0035 & \cellcolor{gray!20}0.0035 & \cellcolor{gray!20}0.0035 \\
2 & 0.0684 & \cellcolor{gray!20}0.0684 & \cellcolor{gray!20}0.0684 & \cellcolor{gray!20}0.0684 \\
3 & 0.0684 & 0.0692 & \cellcolor{gray!20}0.0683 & \cellcolor{gray!20}0.0695 \\
4 & 0.0684 & 0.0692 & 0.0683 & \cellcolor{gray!20}0.0695 \\
5 & 0.0684 & 0.0692 & 0.0683 & \cellcolor{gray!20}0.0701 \\
6 & 0.0684 & 0.0692 & 0.0683 & 0.0703 \\
\midrule
\textsf{Mem-GF-k} & 0.0684 & 0.0692 & 0.0683 & 0.0703 \\
\midrule[1pt]
\multicolumn{5}{c}{\textbf{Amazon-book}} \\
\midrule
$K$ & \textsf{Mem-GF-1} & \textsf{Mem-GF-2} & \textsf{Mem-GF-3} & \textsf{Mem-GF-5} \\
\midrule
1 & \cellcolor{gray!20}0.0011 & \cellcolor{gray!20}0.0011 & \cellcolor{gray!20}0.0011 &\cellcolor{gray!20} 0.0011 \\
2 & 0.0710 & \cellcolor{gray!20}0.0628 & \cellcolor{gray!20}0.0710 & \cellcolor{gray!20}0.0699 \\
3 & 0.0710 & 0.0754 &\cellcolor{gray!20} 0.0707 & \cellcolor{gray!20}0.0710 \\
4 & 0.0710 & 0.0754 & 0.0707 & \cellcolor{gray!20}0.0745 \\
5 & 0.0710 & 0.0754 & 0.0707 & \cellcolor{gray!20}0.0739 \\
6 & 0.0710 & 0.0754 & 0.0707 & 0.0739 \\
\midrule
\textsf{Mem-GF-k}& 0.0710 & 0.0754 & 0.0707 & 0.0739 \\
\midrule[1pt]
\multicolumn{5}{c}{\textbf{ML-20M}} \\
\midrule
$K$ & \textsf{Mem-GF-1} & \textsf{Mem-GF-2} & \textsf{Mem-GF-3} & \textsf{Mem-GF-5} \\
\midrule
1 & \cellcolor{gray!20}0.0248 & \cellcolor{gray!20}0.0248 & \cellcolor{gray!20}0.0248 & \cellcolor{gray!20}0.0248 \\
2 & 0.0554 & \cellcolor{gray!20}0.0554 &\cellcolor{gray!20} 0.0554 &\cellcolor{gray!20} 0.0554 \\
3 & 0.0554 & 0.0566 &\cellcolor{gray!20} 0.0552 & \cellcolor{gray!20}0.0566 \\
4 & 0.0554 & 0.0566 & 0.0516 & \cellcolor{gray!20}0.0565 \\
5 & 0.0554 & 0.0566 & 0.0516 & \cellcolor{gray!20}0.0564 \\
6 & 0.0554 & 0.0566 & 0.0516 & 0.0564 \\
\midrule
\textsf{Mem-GF-k}& 0.0554 & 0.0566 & 0.0516 & 0.0564 \\
\bottomrule
\end{tabular}
\end{table}
\subsection{Empirical Validation of Theorem \ref{thm:polynomial_approx} (RQ5)}
\label{app:exp_thm}

Table~\ref{tab:theorem_exp} summarizes our experiments across all datasets, where we varied the number of Krylov basis vectors ($K$) for polynomial filters of different maximum polynomial orders ($N$). Here, \textsf{Mem-GF-k} denotes an ablation model that omits the Krylov basis construction in \textsf{Mem-GF}. It is observed that, for an $N$-th order polynomial filter, performance remains consistent with that of \textsf{Mem-GF-k} when $K \geq N+1$. This finding confirms that setting $K = N+1$ produces no approximation error under the theorem condition. In contrast, the gray entries with $K \leq N$ illustrate regimes where the guarantee does not apply and performance can become unstable.

\subsection{Analysis of Polynomial Graph Filters (RQ6)}
Table~\ref{filter_table} shows that the optimal polynomial filter varies depending on datasets; for instance, the fifth-order filter (\textsf{Mem-GF-5}) achieves the highest recall on Yelp, whereas the second-order filter (\textsf{Mem-GF-2}) performs best on Amazon-book and ML-20M. As expected, higher-order filters incur longer runtime. Nonetheless, since the overall computation time remains manageable and the search space is restricted to four predefined filters, it is feasible to perform a validation set-based search. Increasing the polynomial order $N$ can improve filter expressiveness, but a sufficiently large Krylov subspace size $K$ is required; in particular, the exactness guarantee holds under exact arithmetic when $N<K$, whereas regimes with $K\le N$ may become unstable.

\begin{table}[t!]
    \scriptsize
    \centering
    \caption{Runtime for preprocessing and recall according to four different types of polynomial graph filters in \textsf{Mem-GF}.}
    \vskip -0.1in
    \begin{tabular}{lcccccc}
    \toprule
    & \multicolumn{2}{c}{\textbf{Yelp}} & \multicolumn{2}{c}{\textbf{Amazon-book}} & \multicolumn{2}{c}{\textbf{ML-20M}}  \\
    \cmidrule(r){2-3} \cmidrule(r){4-5} \cmidrule(r){6-7}
    & Runtime & Recall & Runtime & Recall & Runtime & Recall \\
    \midrule
    \textsf{Mem-GF-1} & \textbf{1.5s} &  0.0687& \textbf{6.6s} &  0.0730   & \textbf{102.2s}  &0.0554 \\
    \textsf{Mem-GF-2} & 2.9s & 0.0692 & 11.4s &  \textbf{0.0753 } & 156.6s & \textbf{0.0566}\\
    \textsf{Mem-GF-3} & 4.3s & 0.0683 & 16.4s &  0.0693  & 209.1s & 0.0516\\
    \textsf{Mem-GF-5} & 6.5s & \textbf{0.0705} &  26.4s & 0.0739  & 314.1s & 0.0564\\
    \bottomrule
    \end{tabular}
    \label{filter_table}
\end{table}
\begin{figure}[t]
    \centering
    \includegraphics[width=0.95\columnwidth]{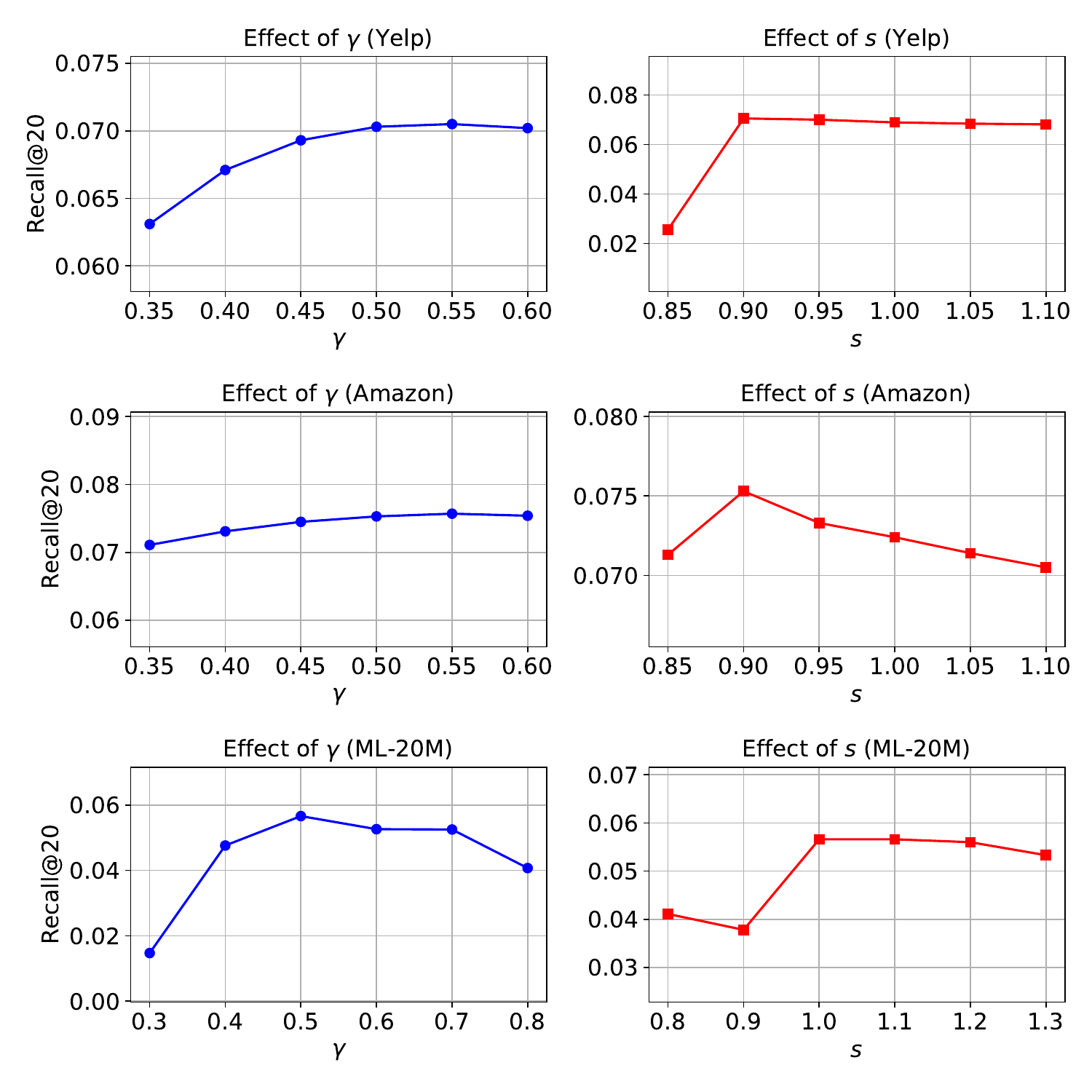}
       \caption{The effect of two hyperparameters $\gamma$ and $s$, on the accuracy of \textsf{Mem-GF}.}
    \label{fig:sa_plot}
\end{figure}

\subsection{Sensitivity Analysis (RQ7)}
\label{app:sa}
Fig. \ref{fig:sa_plot} visualizes the impact of two key hyperparameters in \textsf{Mem-GF}: the normalization parameter $\gamma$ in \eqref{conventional_graph} and the adjustment parameter $s$ in \eqref{eq:barR_def}. First, while Yelp attains its highest accuracy at $\gamma = 0.55$, both Amazon-book and ML-20M achieve optimal performance at $\gamma = 0.5$. Notably, using the conventional choice of $\gamma = 0.5$ still yields sufficiently robust accuracy across all datasets. Second, the optimal values of $s$ are $0.9$ on both Yelp and Amazon-book, and $1.0$ on ML-20M, indicating that setting closely to $s=1$ generally yields robust performance. In addition, we observe a notable performance drop on Yelp and ML-20M when $s<0.9$ and $s < 1.0$, respectively, whereas, outside these ranges, performance remains relatively stable. This aligns with the findings in \cite{park2024turbo}, highlighting the importance of selecting proper ranges of $s$ to ensure optimal recommendation accuracy.

\subsection{Practical Deployment and Limitations}
\textsf{Mem-GF} can be integrated into real-world recommender systems as a training-free GF-based CF module for candidate generation or scoring, and its output scores can be passed to subsequent ranking or re-ranking modules. Its user-specific Krylov subspace construction and score computation are naturally partitionable across workers or GPUs, since they can be performed independently for different users once the adjusted interaction matrix is available. Fully distributed system-level optimization, including distributed data management and cache-refresh strategies, is left as future work. The main limitations are as follows. First, \textsf{Mem-GF} is most directly suited to item similarity graphs built from user--item interactions; social networks, knowledge graphs, or other heterogeneous graph structures may require additional graph filter designs. Second, extremely sparse or noisy user signals, including cold-start-like cases with very few observed interactions, may degrade recommendation quality because the resulting user-specific Krylov subspace may contain insufficient or noisy preference information. In addition, very high polynomial orders may increase numerical sensitivity or computational overhead when $K$ is not sufficiently large. These issues can be mitigated through validation-based tuning of $K$, $N$, $\gamma$, and $s$.

\section{Conclusions and Outlook}
\label{section 5}
In this paper, we introduced \textsf{Mem-GF}, a training-free and memory-efficient GF method for CF. By operating within user-specific Krylov subspaces, \textsf{Mem-GF} eliminates the need to construct or store the full item similarity matrix, achieving substantial reductions in memory usage and preprocessing/inference time. In addition to scalability, \textsf{Mem-GF} supports expressive polynomial filters with low overhead. We further provided the condition under which Krylov filtering incurs no approximation error. Extensive experiments across three benchmark datasets demonstrated that \textsf{Mem-GF} achieves up to 574\% memory reduction, 438\% preprocessing speedup, 2620\% inference speedup, and state-of-the-art recommendation accuracy. Future work includes extending \textsf{Mem-GF} to heterogeneous graph structures and distributed recommendation systems.

\section*{Acknowledgement}
This work was supported by the National Research Foundation of Korea (NRF), a South Korea grant funded by the Korea government (MSIT) (RS-2021-NR059723), and by SMEs Technology Innovation Development Program through the Technology Innovation and Promotion Agency (TIPA), funded by the Ministry of SMEs and Startups (RS-2024-00511332).
\bibliographystyle{IEEEtran}
\bibliography{citation}

\end{document}